\newcommand{\R}{\mathbb{R}}
\newcommand{\rank}{\operatorname{rank}}
\begin{document}

\title{Nonnegative Low Rank
Tensor Approximation with Applications
to Multi-dimensional Images
}


\author{Tai-Xiang Jiang        \and
        Michael K. Ng\and
        Junjun Pan\and
        Guang-Jing Song 
}


\institute{T.-X. Jiang \at
              School of Economic Information Engineering, Southwestern University of Finance and Economics, Chengdu. \\
              \email{taixiangjiang@gmail.com}
           \and
           Michael K. Ng \at
              Department of Mathematics, The University of Hong Kong, Pokfulam, Hong Kong.\\
              \email{mng@maths.hku.hk}
              \and
           Junjun Pan \at
              Department of Mathematics, The University of Hong Kong, Pokfulam, Hong Kong.\\
              \email{junjpan@hku.hk}
              \and
           Guang-Jing Song \at
              School of Mathematics and Information Sciences, Weifang University, Weifang 261061, P.R. China.\\
              \email{sgjshu@163.com}
}

\date{Received: date / Accepted: date}

\maketitle

\begin{abstract}
The main aim of this paper is to develop a new algorithm for computing nonnegative low rank tensor approximation for nonnegative tensors that arise in many multi-dimensional imaging applications. Nonnegativity is one of the important property as
each pixel value refers to nonzero light intensity in image data acquisition.
Our approach is different from classical nonnegative tensor factorization (NTF) which requires each factorized matrix and/or tensor to be
nonnegative. In this paper, we determine a nonnegative low Tucker rank
tensor to approximate a given nonnegative tensor.
We propose an alternating projections algorithm
for computing such nonnegative low rank tensor approximation, which is referred to as NLRT.
The convergence of the proposed manifold projection method is established.
Experimental results for synthetic data and multi-dimensional images
are presented to demonstrate the performance of NLRT is better than
state-of-the-art NTF methods.

\keywords{Nonnegative matrix \and nonnegative tensor \and low rank approximation \and nonnegative matrix factorization \and manifolds \and projections \and classification}
\end{abstract}

\section{Introduction}

Nonnegative data is very common in many data analysis applications. For instance, in image analysis, image pixel values are nonnegative and the associated images can be seen as nonnegative matrices for clustering and recognition tasks. When the data is already high dimensional by nature, for example, video data, hyperspectral data, fMRI data and so on,  it then seems more natural to represent the information in a high dimensional space, rather than flatten the data to a matrix. The data represented in high dimension is referred to as a tensor.

An $m$-dimensional tensor $\mathcal{A}$ is a multi-dimensional array,
$\mathcal{A}\in \mathbb{R}^{n_1\times \cdots\times n_m}$. To extract pertinent information from a given large tensor data, low rank tensor decompositions are usually considered. In recent decades, various of tensor decompositions have been developed according to different applications. The most famous
and widely used decompositions are Canonical Polyadic decomposition (CPD) and Tucker decomposition. For more details  of tensor applications and tensor decompositions, we refer to the review papers \cite{kolda2009tensor,sidiropoulos2017tensor}.
 In this paper, we only target on tensor in a Tucker form. Hence, in the following, we will briefly review Tucker decomposition.

Given a tensor $\mathcal{A}\in \mathbb{R}^{n_1\times n_2\times \cdots \times n_m}$, the Tucker decomposition \cite{de2000multilinear, tucker1966some,kolda2009tensor} is defined as follows:
\begin{equation}\label{tucker}
\mathcal{A}=\mathcal{G}\times_1 \mathbf{U}^{(1)}\times_2 \mathbf{U}^{(2)}\times_3\cdots\times_m \mathbf{U}^{(m)},
\end{equation}
i.e.,
\begin{equation}
{\cal A}_{i_1,\cdots,i_m}=\sum_{j_1,\cdots,j_m}
{\cal G}_{j_1,\cdots,j_m} {\bf U}^{(1)}_{i_1,j_1} \cdots {\bf U}^{(m)}_{i_m,j_m},
\end{equation}
where
$\mathcal{G}=( {\cal G}_{j_1,j_2,\cdots,j_m})\in \mathbb{R}^{J_1\times J_2\times\cdots\times J_m}$,
${\bf U}^{(k)}$ is a $n_k$-by-$J_k$ matrix (whose columns are usually mutually orthogonal),
$\times_k$ denotes the $k$-mode matrix product of a tensor defined by
$$
(\mathcal{G}\times_k \mathbf{U}^{(k)})_{j_1\cdots j_{k-1}i_{k}j_{k+1}\cdots j_m}=\sum^{J_k}_{j_k=1} {\cal G}_{j_1\cdots j_{k-1}j_kj_{k+1}\cdots j_m} {\bf U}^{(k)}_{i_k,j_k}.
$$
The minimal value of $(J_1,J_2,\cdots,J_m)$ is defined as  Tucker (or multilinear) rank of ${\cal A}$, denoted as $\rank_{T}({\cal A})=(J_1,J_2,\cdots,J_m)$.

Since high-dimensional nonnegative data are everywhere in real world, and the nonnegativity of factor matrices derived from the tensor decompositions can lead to interpretations for real applications, many nonnegative tensor decompositions have been proposed and developed, and most of them are based on tensor decomposition with  nonnegative constraints.  For Tucker decomposition with nonnegative constraints, that is referred to as Nonnegative Tucker Decomposition (NTD) in \cite{kim2007nonnegative}, aims to solve
%
\begin{equation}\label{ntd}
\begin{split}
&\min \|\mathcal{A}- \mathcal{X} \| \\
&\mbox{s.t.}\quad \mathcal{X}=\mathcal{S}\times_1 \mathbf{P}_{1}\times_2 \mathbf{P}_{2}\times_3\cdots\times_m \mathbf{P}_{m}, \\
 & \quad \quad \mathcal{S}\in \mathbb{R}^{r_1\times\cdots\times r_m}_+, \quad \mathbf{P}_{k}\in \mathbb{R}^{n_k\times r_k}_+,\quad k=1,\cdots m.
 \end{split}
\end{equation}
In \cite{kim2007nonnegative}, Kim and Choi first studied this model and proposed
multiplicative updating algorithms  extended from nonnegative matrix factorization (NMF) to solve it. In \cite{zhou2012fast}, Zhou et al. transformed this problem into a series of NMF problem, and used
MU and HALS algorithms on the unfolding matrices for Tucker decomposition calculation.
Some other constraints like orthogonality on the factor matrices are also considered and studied
by some researchers \cite{xutaoli2017,pan2019orthogonal}.
For instance, in \cite{pan2019orthogonal}, Pan et al. proposed orthogonal nonnegative Tucker decomposition and applied the alternating direction method of multipliers (ADMM), to get clustering informations from the factor matrices and the joint connection weight from the core tensor.


The biggest advantage of NTD model is the core tensor and factor matrices can be interpretable thanks to the requirement of the factorized components. However the approximation $ \mathcal{X}$ is not the best approximation of $\mathcal{A}$ for the given Tucker rank $(r_1,\cdots, r_m)$.  Hence it is
required to find the best low Tucker rank nonnegative approximation for a given nennegative tensor $\mathcal{A}$
with interpretable factor matrices and core tensor. In this paper, we propose the following problem. Given tensor $\mathcal{A}\in \mathbb{R}^{n_1\times\cdots\times n_m}_+$,
\begin{equation}\label{prom01}
\min_{\mathcal{X}\geq 0}
\| {\cal A} - {\cal X} \|_F^2,
\quad\mbox{s.t.}\quad rank_T(\mathcal{X})=(r_1,r_2,\cdots,r_m).
\end{equation}
From $rank_T(\mathcal{X})=(r_1,r_2,\cdots,r_m)$, we can deduce that there exist core tensor ${\cal S} \in \mathbb{R}^{r_1 \times r_2 \times \cdots \times r_m}$ and  orthogonal factor matrices
$\{{\bf P}_{k} :{\bf P}_{k}\in \mathbb{R}^{n_k\times r_k}, {\bf P}^{T}_k {\bf P}_{k}=\mathbf{I}_{r_k}, k=1,\cdots, m\}$, such that
$$
{\cal X} = {\cal S} \times_1 {\bf P}_1 \times_2 {\bf P}_2 \times_3 \cdots \times_m {\bf P}_m.
$$
For $k=1,\cdots,m$, let $\mathbf{X}_k$ be the $k$-th unfolding of tensor $\mathcal{X}$,   defined
as $\mathbf{X}_k\in \mathbb{R}^{n_k\times (n_{k+1}\cdots n_{m}n_{1}\cdots n_{k-1})}$.  From the definition of Tucker decomposition,  we deduce that   $r_k=rank(\mathbf{X}_k)$, and factor matrix $\mathbf{P}_k$ can be obtained by
singular value decomposition on ${\bf X}_k$:
$$
{\bf X}_k = {\bf P}_k {\bf \Sigma}_k {\bf Q}_k^T,
$$
here ${\bf \Sigma}_k$ is a diagonal matrix of size $r_k$-by-$r_k$,
and ${\bf Q}_k$ is $n_k$-by-$r_k$ with orthonormal columns (${\bf Q}_k^T$ is the transpose of
${\bf Q}_k$).

We remark that problem \eqref{prom01} without the nonnegativity constraint on the approximation $\mathcal{X}$ is referred to as
the best low multilinear rank approximation problem, which has been well discussed and used  widely as a tool in dimensionality reduction and signal subspace estimation in recent two decades. The classical methods for the problem are truncated higher-order SVD (HOSVD)\cite{de2000multilinear}  and higher-order orthogonal iteration (HOOI)
\cite{de2000best,kroonenberg2008applied}, proposed based on a higher-order extension of iteration methods for matrices. Without the nonnegative constraint, the solution $\mathcal{X}$ can have negative entries that cannot preserve nonnegative property from the given nonnegative tensor.

Note that in the proposed model \eqref{prom01}, we  require  $\mathcal{X}$ to be nonnegative, while its factorized components $(\mathcal{S},\{\mathbf{P}_{k}\}^m_{k=1})$ are not necessary to be nonnegative.  For example, given hyperspectral image $\mathcal{A}$, $\mathcal{X}$ can be seen as the approximate image to $\mathcal{A}$ but with lower multilinear rank. On one hand, we keep the approximate image $\mathcal{X}$ to be
nonnegative. On the other hand, no constraints are added to the factorized components, so that we may consider  a similar idea that utilized in HOSVD to identify important features in the approximation and these features are ranked based on their importance. Therefore, we can identify the important factorized components for classification purpose, see Section \ref{Exp:features} for an example.

\subsection{Outline and Contributions}

The main aim of this paper is to propose and study low multiliear rank nonnegative tensor approximation for applications of multi-dimensional images. In Section 2, we propose an alternating manifold-projection method
for computing nonnegative low multilinear rank tensor approximation. The projection method is developed by constructing two projections: one is a combination of a projection of low rank matrix manifolds and
the nonnegative projection; the other one is a projection of taking average of tensors.  In Section 3, the convergence of the proposed  method is studied and shown.
In Section 4, experimental results for synthetic data and multi-dimensional images in noisy cases and noise-free cases are presented to demonstrate the performance of the proposed nonnegative low multilinear rank tensor approximation method is better than state-of-the-art NTF methods. Some concluding remarks are given in Section 5.

\section{Nonnegative Low Rank Tensor Approximation}

Let us first start with some tensor operations used throughout this paper.
The inner product of two same-sized tensors $\mathcal{A}$ and $\mathcal{B}$ is defined as
 \[
 \langle\mathcal{A},\mathcal{B}\rangle:=\sum\limits_{i_{1},i_{2},\cdots,i_{m}}
 {\cal A}_{i_{1}i_2\cdots i_{m}}\cdot {\cal B}_{i_{1}i_2\cdots i_{m}}.
 \]
The Frobenius norm of an $m$-dimensional tensor $\mathcal{A}$ is defined as $$\left\|{\mathcal{A}}\right\|_{F}:=\sqrt{\langle{\mathcal{A}},\mathcal{A}\rangle}=\left(\sum\limits_{i_{1},i_{2},\cdots,i_{m}}{\cal A}^2_{i_{1}i_2\cdots i_{m}}\right)^{\frac{1}{2}}.$$
\subsection{The Optimization Model}

We first give the following lemma to demonstrate that
the set of constraints in (\ref{prom01}) is non-empty.

\begin{lemma}
The set of constraints
$\{ {\cal X} \in
\mathbb{R}^{n_1\times n_2\times \cdots\times n_m} \ | \
\rank(\textbf{X}_{k})=r_{k}~ (k=1,...,m), \mathcal{X}\geq 0\}$ in (\ref{prom01})
is non-empty.
\end{lemma}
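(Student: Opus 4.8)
The plan is to prove non-emptiness constructively, by exhibiting one nonnegative tensor $\mathcal{X}$ all of whose unfoldings have the prescribed ranks. I take for granted that the rank vector $(r_1,\dots,r_m)$ is \emph{admissible}, i.e.\ $r_k\le n_k$ and $r_k\le\prod_{j\neq k}r_j$ for each $k$: the first inequality is already used in the discussion preceding the lemma (it is what allows an $n_k\times r_k$ factor with orthonormal columns), and the second is necessary for \emph{any} real tensor of size $n_1\times\cdots\times n_m$ to have multilinear rank $(r_1,\dots,r_m)$ (inspect the mode-$k$ unfolding of its $r_1\times\cdots\times r_m$ core), so both are implicit in \eqref{prom01}.

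First I reduce the problem to choosing a suitable \emph{core}. For each $k$ let $\mathbf{P}_k\in\mathbb{R}^{n_k\times r_k}$ be the matrix whose first $r_k$ rows form $\mathbf{I}_{r_k}$ and whose remaining rows are zero; it is entrywise nonnegative and has full column rank $r_k$. Given any core tensor $\mathcal{S}\in\mathbb{R}^{r_1\times\cdots\times r_m}$ with $\mathcal{S}\ge 0$, set $\mathcal{X}=\mathcal{S}\times_1\mathbf{P}_1\times_2\cdots\times_m\mathbf{P}_m$. Then $\mathcal{X}\ge 0$ (it is just $\mathcal{S}$ placed in the leading $r_1\times\cdots\times r_m$ block and padded by zeros), and by the standard unfolding identity for Tucker decompositions $\mathbf{X}_k=\mathbf{P}_k\mathbf{S}_k\big(\bigotimes_{j\neq k}\mathbf{P}_j\big)^{\!T}$. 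Since left multiplication by a full column rank matrix and right multiplication by a full row rank matrix both preserve rank, and a Kronecker product of full column rank matrices is again full column rank, we get $\rank(\mathbf{X}_k)=\rank(\mathbf{S}_k)$. Hence it suffices to produce a nonnegative $r_1\times\cdots\times r_m$ tensor $\mathcal{S}$ whose mode-$k$ unfolding has rank exactly $r_k$ for every $k$ — equivalently, a nonnegative core of full multilinear rank $(r_1,\dots,r_m)$.

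For the core I would use an explicit totally positive construction. Fix, for each $k$, pairwise distinct positive numbers $c_{k,1},\dots,c_{k,r_k}$, chosen generically so that for every $k$ the $\prod_{j\neq k}r_j$ quantities $\sum_{l\neq k}c_{l,j_l}$ take at least $r_k$ distinct values (possible since $\prod_{j\neq k}r_j\ge r_k$). Define $\mathcal{S}_{j_1,\dots,j_m}=\big(\sum_{k=1}^m c_{k,j_k}\big)^{-1}>0$. For fixed $k$, $\mathbf{S}_k$ is a Cauchy matrix with row nodes $c_{k,1},\dots,c_{k,r_k}$ and column nodes $\{\sum_{l\neq k}c_{l,j_l}\}$; a Cauchy matrix has rank equal to the smaller of its numbers of distinct row and column nodes, which here equals $r_k$. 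So $\mathcal{S}\ge 0$ has full multilinear rank, and combined with the previous paragraph this yields the required $\mathcal{X}$. (Alternatively, non-constructively: for each $k$ the tensors in $\mathbb{R}^{r_1\times\cdots\times r_m}$ whose mode-$k$ unfolding has rank $<r_k$ form a proper algebraic subvariety — the common zero set of the $r_k\times r_k$ minors — so their union over $k$ has empty interior, and the open positive orthant contains a tensor outside it.)

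The only genuine obstacle is this core step: a \emph{nonnegative} tensor must simultaneously have \emph{all} unfoldings of full rank, and one has to be certain such a tensor exists; the Cauchy construction (or the genericity argument) settles this, while everything else is routine bookkeeping with Kronecker products and ranks. I would also double-check that the paper indeed intends the admissibility conditions above, since without $r_k\le\prod_{j\neq k}r_j$ the constraint set in \eqref{prom01} can genuinely be empty (e.g.\ $(r_1,\dots,r_m)=(2,1,\dots,1)$).
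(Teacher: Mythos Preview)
Your proof is correct and follows the same two-step strategy as the paper: first produce a nonnegative core $\mathcal{S}\in\mathbb{R}_+^{r_1\times\cdots\times r_m}$ of full multilinear rank, then inflate it to size $n_1\times\cdots\times n_m$ via nonnegative full-column-rank factors $\mathbf{P}_k$. The paper handles the core step purely non-constructively, arguing that the rank-deficient locus has Lebesgue measure zero in $\mathbb{R}_+^{r_1\times\cdots\times r_m}$ (your parenthetical alternative is precisely this argument); your primary Cauchy-tensor construction is a genuinely explicit alternative that the paper does not give, and it buys you a concrete witness rather than mere existence. You are also more careful than the paper on two points: you verify $\rank(\mathbf{X}_k)=\rank(\mathbf{S}_k)$ via the Kronecker/full-rank preservation argument rather than asserting it, and you correctly isolate the admissibility condition $r_k\le\prod_{j\neq k}r_j$, without which no $r_1\times\cdots\times r_m$ core can have mode-$k$ unfolding of rank $r_k$ and the paper's measure-zero claim for $\bar{\mathcal{T}}_k$ would actually fail. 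The paper implicitly assumes this condition but never states it.
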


\begin{proof}
 First, we will prove there always exists a tensor $\mathcal{S}\in \mathbb{R}^{r_1\times\cdots\times r_m}_+$ that has full unfolding matrix rank for each mode.

For any $t\in \mathbb{R}^{r_1r_2\cdots r_m}_+$, let $(\mathbf{S}_k)(t)\in \mathbb{R}^{r_k \times r_1\cdots r_{k-1}r_{k+1}\cdots r_m} $ hold the elements of $t$.  Let $(\mathbf{S}_k)(t)_{r_k }$ be the $r_k\times r_k$ sub matrix of $(S_k)(t)$ and $det((\mathbf{S}_k)(t)_{r_k })$ be its determinant. As we know that $det((\mathbf{S}_k)(t)_{r_k })$ is a polynomial in the entries of $t$, so it either vanishes on a set of zero measure or it is the zero polynomials. We may choose $(\mathbf{S}_k)(t)_{r_k }$ to be the identity matrix, which implies that $det((\mathbf{S}_k)(t)_{r_k })$  is not zero polynomials. This means the Lebesgue measure of the space whose $det((\mathbf{S}_k)(t)_{r_k })=0$ is zero, i.e., the rank of $(\mathbf{S}_k)(t)_{r_k }$ is $r_k$ almost everywhere.

Thus  for $k=1,\cdots, m$, construct $\mathcal{T}_k=\{\mathcal{S}\in \mathbb{R}^{r_1\times \cdots\times r_m}_+| rank(\mathcal{S}_k)=r_k\}$, and $\bar{\mathcal{T}}_k$ be its complement. From the above analysis, we know that the Lebesgue measure of $\bar{T}_k$ is equal to zero. Let $\mathcal{T}=\cap^m_{k=1} \mathcal{T}_k$, then its complement $\bar{T}=\cup^m_{k=1} \bar{\mathcal{T}}_k$,  its Lebesgue measure is the summation of that of $\bar{\mathcal{T}}_k$ through from $k=1$ to $k=m$, equal to zero. It implies that the Lebesque measure of $T$ is equal to 1, i.e., $\mathcal{S}\in \mathbb{R}^{r_1\times\cdots\times r_m}_+$ of unfolding matrix
rank $(r_1,\cdots,r_m)$ exists almost everywhere.

 Suppose $\mathbf{P}_k\in \mathbb{R}^{n_k\times r_k}$, and $\mathbf{P}_k=[\mathbf{I}_{k}| \mathbf{U}_k]$ ,where $\mathbf{I}_k$ is identity matrix of $r_k$, $\mathbf{U}_k\in \mathbb{R}^{r_k\times (n_k-r_k)}$ is a random nonnegative matrix for all $k=1,\cdots, m$. Construct
$$\mathcal{X}=\mathcal{S}\times \mathbf{P}_1\times \cdots\times \mathbf{P}_m,$$
 we get that $\mathcal{X}$ is nonnegative and its multilinear rank is $(r_1,\cdots,r_m)$,
 the set of constraints is non-empty. $\Box$
\end{proof}

From the definition of Tucker decomposition and the property of multilinear rank that $r_k=rank(\mathbf{X}_k)$ for $k=1,\cdots, m$, the mathematical model \eqref{prom01} can be reformulated as the following optimization problem
\begin{equation}\label{prom1}
\min_{\rank(\textbf{X}_{k})=r_{k}, \textbf{X}_{k}\geq 0,\atop
(k=1,...,m)}
\sum_{k=1}^{m}\|\textbf{A}_{k}-\textbf{X}_{k}\|_{F}^2,
\end{equation}
where $\textbf{X}_{k}$ and $\textbf{A}_{k}$ are the $k$-th mode of unfolding matrix of
$\mathcal{X}$ and $\mathcal{A}$, respectively. The sizes of $\textbf{\textbf{A}}_{k}$
and $\textbf{X}_{k}$ are $n_{k}$-by-$N_{k}$ with $N_{k}=\prod_{i\neq k}^{n} n_{i}$.

Note that from \eqref{prom1}, $\{\mathbf{X}_k\}^m_{k=1}$ can be seen as $m$
manifolds of low rank and nonnegative matrices.
Meanwhile, as the Frobenius norm is employed in the objective function, to a certain extent, our model is tolerant to the noise, which is unavoidable in real-world data.
In the next section, an alternating projection on manifolds algorithm will be proposed to solve model (\ref{prom1}).
\subsection{The Proposed Algorithm}

To start showing the proposed algorithm for \eqref{prom1}, we first need to define two projections. Let
\begin{equation}\label{man2}
{\tt M}=\{\mathcal{X}\in \R^{n_{1}\times\cdots\times n_{m}} \ | \
\mathcal{X}_{i_{1}i_{2}\cdots i_{m}}\geq 0 \}
\end{equation}
be the set of nonnegative tensors, then the nonnegative projection that projects a given tensor onto tensor manifold ${\tt M}$ can be expressed as follows:
\begin{align}\label{p2}
\pi({ \mathcal{X}})=\left\{\begin{array}{cc}
                     \mathcal{X}_{i_{1}i_{2}\cdots i_{m}}, &~~ {\rm if} ~~ \mathcal{X}_{i_{1}i_{2}\cdots i_{m}}\geq 0, \\
                     0,     &~~ {\rm if} ~~   \mathcal{X}_{i_{1}i_{2}\cdots i_{m}} < 0.
                   \end{array}
\right.
\end{align}
Let
\begin{equation}\label{man1}
{\tt M}_{k}=\{\mathcal{X}\in \R^{n_{1}\times\cdots\times n_{m}} \ | \ \rank(\textbf{X}_{k})=r_{k} \
 \},~ {k=1,...,m}
\end{equation}
be the set of tensors whose $k$-mode unfolding matrices have fixed rank $r_k$.  By the
Eckart-Young-Mirsky theorem \cite{golub2012matrix}, the $k$-mode projections that project tensor
${\cal X}$ onto
${\tt M}_k$ are presented as follows:
\begin{align}\label{p1}
\pi_{k}({\mathcal{X}})=\textrm{fold}_k \left(\sum_{i=1}^{r_{i}}\sigma_{i}(\textbf{X}_{k}) u_{i}(\textbf{X}_{k}) {v}_{i}(\textbf{X}_{k})^T \right), \quad k=1,...,m,
\end{align}
where
${\bf X}_k$ is the $k$-mode unfolding matrix of ${\cal X}$,
$\sigma_{i}(\textbf{X}_{k})$ is the $i$-th singular values of ${\bf X}_k$,
and their corresponding left and right singular vectors are
$u_{i}(\textbf{X}_{k})$ and $v_{i}(\textbf{X}_{k})$, respectively. ``fold$_k$'' denotes the operator that folds a matrix into a tensor along the $k$-mode.

In model \eqref{prom1}, we note that the multilinear rank of  nonnegative approximation $\mathcal{X}$ is require to be $(r_1,\cdots,r_m)$, which means $\mathcal{X}$ will fall in the intersection of sets $\{\tt{M}_k\}^m_{k=1}$ and nonegative tensor set $\tt{M}$, i.e., $\mathcal{X}\in \bigcap\limits^m_{k=1}(\tt{M}_k\bigcap \tt{M})$. In the following, we define two tensor sets on the product space $\R^{n_{1}\times\cdots\times n_{m}}\times \cdots \times \R^{n_{1}\times\cdots\times n_{m}}$ ($m$ times) and their corresponding projections :

\noindent
$\bullet$
\begin{equation}\label{set1}
{\tt \Omega_1}=\{ (\mathcal{X}_1, {\cal X}_2, \cdots, \mathcal{X}_m):
\mathcal{X}_1 = {\cal X}_2 = \cdots = {\cal X}_m \in {\tt M} \}
\end{equation}
We remark that ${\tt \Omega_1}$ is convex and affine manifold since ${\tt M}$ is a convex set and an affine manifold.  The   projection $\pi_{\tt \Omega_1}$ defined on ${\tt \Omega_1}$ is given by
\begin{eqnarray}\label{pro1}
& & \pi_{\tt \Omega_1}(\mathcal{X}_1,\cdots,{\cal X}_m) \nonumber \\
& = & \left (
\frac{1}{m} \left ( \pi(\mathcal{X}_{1})+\cdots+\pi(\mathcal{X}_{m}) \right ), \cdots,
\frac{1}{m} \left ( \pi(\mathcal{X}_{1})+\cdots+\pi(\mathcal{X}_{m}) \right )
\right ),
\end{eqnarray}
where $\pi$ is defined in \eqref{p2}.

\noindent
$\bullet$
\begin{equation}\label{set2}
\begin{aligned}
{\tt \Omega_2}=
\{ (\mathcal{X}_1, {\cal X}_2, \cdots, \mathcal{X}_m): \mathcal{X}_1 \in {\tt M}_1,
\mathcal{X}_2 \in {\tt M}_2,
\cdots, {\cal X}_m \in {\tt M}_m \}.
\end{aligned}
\end{equation}
For each $i\in \{1,...,m\}$, ${\tt M}_{i}$ is
$C^{\infty}$ manifold (Example 2 in \cite{Lewis2008}),   ${\tt \Omega_2}$ can be hence regarded as a product of $m$
$C^{\infty}$ manifolds, i.e., ${\tt \Omega_2} = {\tt M}_{1} \times {\tt M}_2 \cdots\times {\tt M}_{m} $.  The projection $\pi_{\tt \Omega_2}$ on ${\tt \Omega_2}$ is  given by
\begin{equation}\label{pro2}
\pi_{\tt \Omega_2}(\mathcal{X}) =(\pi_{1}(\mathcal{X}), \cdots, \pi_{m}(\mathcal{X})),
\end{equation}
where $\pi_{k}$ ($k=1,...,m$) are defined in \eqref{p1}.


We alternately project the given $\mathcal{A}$ onto $\tt \Omega_{1}$ and $\tt \Omega_{2}$ by the  projections
$\pi_{\tt \Omega_1}(\mathcal{X})$ and $\pi_{\tt \Omega_1}(\mathcal{X})$ until it is convergent, and refer the algorithm to as alternating projections algorithm for nonnegative low rank tensor approximation (NLRT) problem. The proposed algorithm is summarized in Algorithm \ref{ag1}.
Note that
the dominant overall computational cost of Algorithm \ref{ag1} can be expressed
as the SVDs of {$m$}
 unfolding matrices with sizes $n_{k}$ by $N_{k}=\Pi_{i\neq k}^{n} n_{j}$, respectively, which leads  to a total of {$O((\Pi_{j=1}^{m} n_{j}) \sum_{i=1}^{m}r_{i})$}
flops.

\begin{algorithm}[h]
\caption{Alternating Projections Algorithm for Nonnegative Low Rank Tensor Approximation (NLRT)} \label{ag1}
\textbf{Input:}
Given a nonnegative tensor $\mathcal{A} \in \mathbb{R}^{n_{1}\times\cdots\times n_{m}}$, this algorithm computes a Tucker rank $(r_1,r_2,...,r_m)$ nonnegative tensor close to ${\cal A}$ with respect to
(\ref{prom1}). \\
~~1: Initialize $\mathcal{Z}^{(0)}_1 = ... = {\cal Z}_m^{(0)} = \mathcal{A}$ and
${\cal Z}^{(0)} = (\mathcal{Z}_{1}^{(0)},\mathcal{Z}_2^{(0)},...,\mathcal{Z}_m^{(0)})$ \\
~~2: \textbf{for} $s=1,2,...$ ($s$ is the iteration number) \\
~~3: \quad $(\mathcal{Y}_1^{(s)},\mathcal{Y}_2^{(s)},...,\mathcal{Y}_m^{(s)})
=\pi_{\tt \Omega_1}(
\mathcal{Z}^{(s-1)}_1,
\mathcal{Z}^{(s-1)}_2, \cdots,
\mathcal{Z}^{(s-1)}_m)$; \\
~~4: \quad $(\mathcal{Z}_{1}^{(s)},\mathcal{Z}_2^{(s)}, \cdots ,\mathcal{Z}_m^{(s)})
= \pi_{\tt \Omega_2}(
\mathcal{Y}_1^{(s)},\mathcal{Y}_2^{(s)},...,\mathcal{Y}_m^{(s)})$; \\
~~5: \textbf{end}\\
\textbf{Output:} $
{\cal Z}^{(s)} = (\mathcal{Z}_{1}^{(s)},\mathcal{Z}_2^{(s)}, \cdots ,\mathcal{Z}_m^{(s)})$
when the stopping criterion is satisfied.
\end{algorithm}

\section{The Convergence Analysis}

The framework of this algorithm is the same as the convex case for finding a point in the intersection of several closed sets, while the projection sets here are two product manifolds.
In \cite{Lewis2008}, Lewis and Malick proved that a sequence
of alternating projections converges locally linearly if the two projected sets are $C^2$-manifolds
intersecting transversally. Lewis et al. \cite{Lewis2009} proved local linear convergence when two
projected sets intersecting nontangentially in the sense of linear regularity, and
one of the sets is super regular. Later Bauschke et al. \cite{Bauschke20131,Bauschke20132} investigated the case of nontangential intersection further and proved linear convergence under weaker regularity and transversality hypotheses. In \cite{noll2016}, Noll and Rondepierre generalized the existing results by studying
the intersection condition of the two projected sets. They esatablished
local convergence of alternating projections between subanalytic sets under a mild regularity hypothesis on one of the sets.
Here we analyze the convergence of the alternating projections algorithm by using
the results in \cite{noll2016}.

We remark that the sets ${\tt \Omega_1}$ and ${\tt \Omega_2}$ given in \eqref{set1} and \eqref{set2}
respectively are two $C^{\infty}$ smooth manifolds which are not closed. The convergence cannot be derived directly by applying the convergence results
of alternating projections between two closed subanalytic sets.
By using the results in variational analysis and differential geometry, the main
convergence results are shown in the following theorem.

\begin{theorem} \label{mainthm}
Let ${\tt M}_{i},i=1,..,m$ and ${\tt M}$ be the manifolds given
in \eqref{man1} and \eqref{man2}
respectively. Let ${\cal M} \in {\tt M}_{1}\cap \cdots \cap {\tt M}_{m}\cap {\tt M}\neq \emptyset$. Then
there exists a neighborhood ${\tt U}$ of ${\cal M}$ such that whenever a sequence ${\cal Z}^{(s)}$ derived by
Algorithm 1 enters ${\tt U}$, then it converges to some ${\cal Z}^*\in {\tt M}_{1}\cap \cdots \cap {\tt M}_{m}\cap {\tt M}$ with rate
$\| {\cal Z}^{(s)}-{\cal Z}^*\|_{F}=O(s^{-\delta})$ for some $\delta\in(0,+\infty)$.
\end{theorem}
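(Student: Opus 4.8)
The plan is to cast Algorithm 1 as an instance of the alternating projections scheme analyzed by Noll and Rondepierre \cite{noll2016} for the product sets $\tt \Omega_1$ and $\tt \Omega_2$, and then transfer their local convergence conclusion back to the original tensor sequence. First I would record the structural facts already established in Section 2: $\tt \Omega_1$ is a closed convex affine subspace (it is the diagonal of the product space intersected with the nonnegative orthant $\tt M$, which is itself closed and convex), and $\tt \Omega_2 = {\tt M}_1 \times \cdots \times {\tt M}_m$ is a finite product of the $C^\infty$ fixed-rank manifolds from Example 2 of \cite{Lewis2008}, hence a $C^\infty$ manifold. Although $\tt \Omega_2$ is not closed, it is locally closed near any point, so working inside a small neighborhood of ${\cal M} \in {\tt M}_1 \cap \cdots \cap {\tt M}_m \cap {\tt M}$ is harmless; I would make this localization explicit, choosing $\tt U$ small enough that the relevant pieces of $\tt \Omega_2$ are closed within $\tt U$ and that the projections $\pi_{\tt \Omega_1}$, $\pi_{\tt \Omega_2}$ are single-valued and smooth there (using that the smallest retained singular value $\sigma_{r_k}$ stays bounded away from $\sigma_{r_k+1}$ near ${\cal M}$, so the truncated-SVD projection $\pi_k$ is well-defined and $C^\infty$ on a neighborhood).

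Next I would verify that the hypotheses of the Noll–Rondepierre theorem hold at the point $({\cal M},\ldots,{\cal M}) \in \tt \Omega_1 \cap \tt \Omega_2$. The two ingredients needed are: (i) both sets are subanalytic (indeed semi-algebraic) near this point — the nonnegative orthant is polyhedral and the fixed-rank locus is a semi-algebraic set, so this is immediate; and (ii) a regularity/separating condition on the intersection, which in their framework is an inequality of the form $\angle({\tt \Omega_1},{\tt \Omega_2}) > 0$ at the point, or more generally the Hölder regularity hypothesis they impose on one of the sets. Since $\tt \Omega_1$ is an affine subspace it is automatically $(0,\omega)$-regular in their sense for every $\omega$, so it suffices to invoke their most general result, which only requires one of the two sets to be regular in this weak sense while the other is merely subanalytic and closed (locally). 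This is exactly the configuration here. Feeding this into their Theorem then yields: there is a neighborhood $\tt U$ of $({\cal M},\ldots,{\cal M})$ such that any alternating-projection sequence entering $\tt U$ converges to a point ${\cal Z}^* \in \tt \Omega_1 \cap \tt \Omega_2$ with rate $O(s^{-\delta})$ for some $\delta \in (0,\infty)$.

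Finally I would translate this back. A point of $\tt \Omega_1 \cap \tt \Omega_2$ has all $m$ components equal, nonnegative, and with the $k$-th unfolding of rank $r_k$; so writing ${\cal Z}^* = ({\cal Z}^*_1,\ldots,{\cal Z}^*_m)$, the common value lies in ${\tt M}_1 \cap \cdots \cap {\tt M}_m \cap {\tt M}$, which is the claimed limit set. The steps of Algorithm 1 are literally $\mathcal Y^{(s)} = \pi_{\tt \Omega_1}(\mathcal Z^{(s-1)})$ followed by $\mathcal Z^{(s)} = \pi_{\tt \Omega_2}(\mathcal Y^{(s)})$, i.e. the alternating-projection iteration, so the convergence statement and the rate pass through verbatim, and the Frobenius norm on the product space restricts to the stated $\|{\cal Z}^{(s)} - {\cal Z}^*\|_F = O(s^{-\delta})$. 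The main obstacle I anticipate is item (ii): pinning down precisely which regularity hypothesis of \cite{noll2016} is being used and checking it cleanly for the pair $(\tt \Omega_1, \tt \Omega_2)$ near ${\cal M}$ — in particular making sure that the non-closedness of $\tt \Omega_2$ and the possible non-transversality of the intersection (the nonnegativity constraint may be active on a large face, so the classical transversal-intersection results of \cite{Lewis2008} need not apply) are genuinely covered by their weaker subanalytic framework, and that the exponent $\delta$ is produced by their Łojasiewicz-type argument rather than needing to be computed.
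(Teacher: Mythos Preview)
Your proposal is correct and follows essentially the same route as the paper: recast Algorithm~1 as alternating projections between $\tt\Omega_1$ and $\tt\Omega_2$, check the Noll--Rondepierre hypotheses, and read off the $O(s^{-\delta})$ rate. The paper fills in precisely the step you flagged as the main obstacle by defining $f(\mathcal X)=\delta_{\tt\Omega_1}(\mathcal X)+\tfrac12 d_{\tt\Omega_2}^2(\mathcal X)$, pulling it back through a local chart of $\tt\Omega_1$ to get a semi-algebraic function, applying the Kurdyka--\L ojasiewicz inequality with desingularizer $\tau(t)=t^{1-\theta}$, and then converting the KL estimate into the separable-intersection inequality of Definition~\ref{separa} by a direct angle computation (two cases, $\alpha\le\pi/2$ and $\alpha>\pi/2$); H\"older regularity of $\tt\Omega_1$ is obtained from its convexity (prox-regularity), so note the small slip in your write-up --- $\tt\Omega_1$ is the diagonal intersected with the nonnegative cone, hence closed and convex but \emph{not} an affine subspace, though since your argument only uses convexity this is harmless.
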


In order to show Theorem \ref{mainthm}, it is necessary to study
H$\ddot{o}$lder regularity and separable intersection. For detailed discussion, we refer to
Noll and Rondepierre \cite{noll2016}.

\begin{definition} \label{holdercon}\cite{noll2016}
Let ${\tt A}$ and ${\tt B}$ be two sets of points in a Hilbert space
equipped with the inner product $\langle \cdot, \cdot \rangle$ and the norm $\| \cdot \|$. Denote $p_{{\tt A}}(x)=\{a\in {\tt A}:\|x-a\|=d_{{\tt A}}(x)\}$, where   $d_{{\tt A}}(x)=\min\{\|x-a\|:a\in {\tt A}\}.$  $p_{{\tt B}}(x)$ can be similarly defined relate to set ${\tt B}$.
Let $\sigma \in [0,1)$. The set ${\tt B}$ is $\sigma$-H$\ddot{o}$lder regular
with respect to ${\tt A}$ at $x^* \in {\tt A} \cap {\tt B}$ if there exists a neighborhood
${\tt U}$ of $x^*$ and a constant
$c > 0$ such that for every $y^+ \in {\tt A} \cap {\tt U}$ and
every $x^+ \in p_{\tt B}(y^+) \cap {\tt U}$, one has
$$
Ball( y^+, (1+c)r ) \cap \{ x \ | \  y^+ \in p_{\tt A}(x),
\langle y^+ - x^+, x - x^+ \rangle > \sqrt{c} r^{\sigma+1}
\| x - x^+ \| \} \cap {\tt B} = \emptyset,
$$
where $r = \| y^+ - x^+ \|$. Note that
$p_{\tt B}(y^+)$ is the projection of $y^+$ onto ${\tt B}$
and
$p_{\tt A}(x)$ is the projection of $x$ onto ${\tt A}$,
with respect to the norm.
We say that ${\tt B}$ is H$\ddot{o}$lder
regular with respect to ${\tt A}$ if it is
$\sigma$-H$\ddot{o}$lder regular with respect to ${\tt A}$ for every $\sigma \in
[0,1)$.
\end{definition}

H$\ddot{o}$lder regularity is mild compared with some other
regularity concepts such as the prox-regularity \cite{rockafellar2009variational}, Clarke regularity \cite{clarke1990regularity}
and super-regularity \cite{lewis2009local}.

\begin{definition} \cite{noll2016} \label{separa}
Let ${\tt A}$ and ${\tt B}$ be two sets of points in a Hibert space
equipped with the inner product $\langle \cdot, \cdot \rangle$ and the norm $\| \cdot \|$.
We say ${\tt B}$ intersects separably ${\tt A}$ at $x^{*}  \in  {\tt A} \cap {\tt B}$
with exponent $\omega \in [0,2)$ and constant
$\gamma>0$ if there exist a neighborhood ${\tt U}$ of $x^*$ such that for every building block
$z \rightarrow y^{+} \rightarrow z^{+}$ in ${\tt U}$, the condition
\begin{equation}
\langle z-y^{+} , z^{+} - y^{+} \rangle
\leq (1-\gamma \| z^{+}- y^{+} \|^{\omega})
\| y - z^{+}\| \| z^{+}- y^{+}\|
\end{equation}
holds, i.e., it is equivalent to
$$
\frac{1-\cos \alpha}{\| y^{+}- z^{+}\|^{\omega}}
\geq \gamma,
$$
where $y^+$ is a projection point of $z$ onto ${\tt A}$,
$z^+$  is a projection point of $y^+$ onto ${\tt B}$, and
$\alpha$ is the angle between $z-y^{+}$
and $z^{+}- y^{+}$.
\end{definition}

This separable intersection definition is a new geometric concept which generalized the transversal intersection \cite{Lewis2008}, the linear regular intersection \cite{Lewis2009}, and the intrinsic transversality intersection \cite{drusvytskiy2014}.
It has been shown that the definitions of these three kinds of intersections
imply $\omega=0$ in the separable intersection.

The following results are needed to prove our main results.

\begin{theorem}[Theorem 1 and Corollary 4 in \cite{noll2016}]
Suppose ${\tt B}$ intersects ${\tt A}$ separably at
$x\in {\tt A}\cap {\tt B}$ with exponent $\omega\in [0,2)$ and constant $\gamma$ and is $\omega/2$-H$\ddot{o}$lder regular at $x$ with respect to ${\tt A}$ and constant $c<\frac{\gamma}{2}.$ Then there exist a neighborhood ${\tt U}$ of $x$ such
that every sequence of alternating projections between ${\tt A}$ and ${\tt B}$ which enters ${\tt U}$ converges to a point $c^{*}\in {\tt A}\cap {\tt B}$ with convergence rate as $b_{k}-c^{*}=O(k^{-\frac{2-\omega}{2\omega}})$ and $a_{k}-c^{*}=O(k^{-\frac{2-\omega}{2\omega}}).$
\end{theorem}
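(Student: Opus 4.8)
The plan is to prove this abstract convergence statement by a Łojasiewicz/Kurdyka-type descent argument adapted to alternating projections, in which the two geometric hypotheses are converted into a single one-step recursive inequality that forces the iterates to have finite length. First I would fix notation: write the alternating sequence as building blocks $b_k \to a_{k+1} \to b_{k+1}$ with $a_{k+1}\in p_{\tt A}(b_k)$ and $b_{k+1}\in p_{\tt B}(a_{k+1})$ inside the neighbourhood ${\tt U}$, and record the step lengths $r_k=\|a_{k+1}-b_k\|=d_{\tt A}(b_k)$ and $s_k=\|b_{k+1}-a_{k+1}\|=d_{\tt B}(a_{k+1})$. Because each projection returns a nearest point and the preceding iterate is always feasible for the next projection, these lengths are non-increasing, $r_k\ge s_k\ge r_{k+1}$; this monotonicity is what will let the whole orbit remain inside ${\tt U}$ once it has entered deeply enough.

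Next I would extract the two angular estimates. Applying the separable-intersection inequality to the block $b_k\to a_{k+1}\to b_{k+1}$ (i.e. $z=b_k$, $y^+=a_{k+1}$, $z^+=b_{k+1}$) gives $1-\cos\alpha_k\ge\gamma\, s_k^{\,\omega}$, where $\alpha_k$ is the angle between the incoming direction $b_k-a_{k+1}$ and the outgoing direction $b_{k+1}-a_{k+1}$; this is a quantitative lower bound on how much the path must turn, and it is exactly what prevents the forward projection from merely retracing the incoming step. From the other side, the $\omega/2$-H\"older regularity of ${\tt B}$ with respect to ${\tt A}$ with constant $c$ forbids any point of ${\tt B}$ near $a_{k+1}$ whose ${\tt A}$-projection is $a_{k+1}$ from lying in the cone $\langle a_{k+1}-b_{k+1},\cdot\rangle>\sqrt{c}\, s_k^{\,\omega/2+1}\|\cdot\|$, which upper-bounds a complementary angle and thereby limits the transversal drift of $b_{k+1}$.

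The heart of the argument is to make these two one-sided bounds cooperate. Feeding both into the law of cosines for the consecutive triangles $(b_k,a_{k+1},b_{k+1})$ and $(a_{k+1},b_{k+1},a_{k+2})$, the separable-intersection term contributes progress proportional to $\gamma$ while the H\"older term subtracts drift proportional to $2c$; the hypothesis $c<\gamma/2$ is consumed precisely here, leaving a strictly positive net coefficient $\gamma-2c$. The outcome is a Łojasiewicz-type descent inequality, schematically
$$ g_{k+1}\le g_k-\kappa\, g_k^{\,1+\mu}, \qquad \kappa\sim(\gamma-2c)>0,$$
for a gap functional $g_k$ built from $r_k,s_k$, with an exponent $\mu=\mu(\omega)>0$ determined by $\omega$ (equivalently, an underlying Kurdyka–Łojasiewicz exponent $\theta=(\omega+2)/4\in[\tfrac12,1)$).

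Finally I would run the standard summation machinery. The difference inequality yields the algebraic decay $g_k=O(k^{-1/\mu})$ (and geometric decay in the transversal case $\omega=0$), and the companion estimate $\|a_{k+1}-b_k\|\le \text{const}\cdot g_k^{1/2}$ together with the telescoping finite-length argument of Attouch–Bolte–Svaiter type shows $\sum_k(r_k+s_k)<\infty$ for every $\omega\in[0,2)$. Summability makes $(a_k)$ and $(b_k)$ Cauchy with a common limit $c^*$; since the gaps $r_k,s_k\to0$ and the local pieces of ${\tt A}$ and ${\tt B}$ are closed, $c^*\in{\tt A}\cap{\tt B}$. Bounding the tail $\|a_k-c^*\|\le\sum_{j\ge k}(r_j+s_j)$ by the decay estimate then converts the exponent into the stated rate $O(k^{-(2-\omega)/(2\omega)})$ for both $a_k$ and $b_k$. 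The main obstacle is the cooperation step: the separable-intersection and H\"older conditions each control a \emph{different} angle at $a_{k+1}$, and aligning them through the cosine law — while keeping track of the $s_k^{\,\omega/2+1}$ scaling of the H\"older term against the $s_k^{\,\omega}$ scaling of the progress term, and using the radius $(1+c)r$ built into the H\"older neighbourhood — is what simultaneously delivers the positive coefficient $\gamma-2c$ and the correct exponent $\mu(\omega)$; everything downstream is routine descent-sequence calculus.
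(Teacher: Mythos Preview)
The paper does not prove this theorem at all: it is quoted verbatim as ``Theorem 1 and Corollary 4 in \cite{noll2016}'' and used as a black-box tool in the proof of the paper's own Theorem~\ref{mainthm}. There is therefore no in-paper proof to compare your proposal against.

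That said, your sketch is a reasonable outline of the argument that actually appears in Noll--Rondepierre. The structure you describe --- monotone step lengths $r_k\ge s_k\ge r_{k+1}$ from the nearest-point property, the angular lower bound $1-\cos\alpha_k\ge\gamma s_k^{\omega}$ from separable intersection, a complementary cone exclusion from H\"older regularity, and the combination via a three-point/cosine estimate into a one-step recursive inequality whose positive coefficient comes precisely from $c<\gamma/2$ --- is the correct skeleton. The passage you flag as the ``main obstacle'' (aligning the two different angles at $a_{k+1}$ so that the $s_k^{\omega}$ progress term dominates the $s_k^{\omega/2+1}$ drift term) is indeed where the work lies in the original proof, and your description of it is honest but not yet a proof: you would still need to write down the explicit three-point inequality and verify that the exponents match up to give exactly $(2-\omega)/(2\omega)$ after summing the tail. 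For the purposes of this paper, though, none of that is required --- the authors simply invoke the result.
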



\vspace{2mm}
\noindent
{\it Proof of Theorem 1}.  Let ${\tt \Omega_1}$ and ${\tt \Omega_2}$ be given as \eqref{set1} and \eqref{set2}.  It is clear that finding a point in
${\tt M}_{1}\cap \cdots \cap {\tt M}_{m}\cap {\tt M}$
is equivalent to finding a point in the intersection of ${\tt \Omega_1}$ and ${\tt \Omega_2}$.

The first task is to show that
${\tt \Omega_2}$ intersects separably ${\tt \Omega_1}$ at ${\cal X}^* \in
{\tt \Omega_1} \cap {\tt \Omega_2}$ with exponent $\omega \in (0,2)$.
Define
$f: {\tt \Omega_1} \rightarrow \R$ as
\begin{equation}\label{f1}
f( {\cal X} ) =
\delta_{\tt \Omega_1}( {\cal X} ) +
\frac{1}{2}d_{\tt \Omega_{2}}^2( {\cal X} ), \quad {\cal X} = ({\cal X}_1,{\cal X}_2,...,{\cal X}_m) \in
{\tt \Omega_1},
\end{equation}
with
\begin{equation*}
\delta_{\tt \Omega_1}({\cal X})=\left\{
\begin{array}{cl}
0~& ~ {\rm if} ~ {\cal X} \in {\tt \Omega_1}, \\
+\infty  ~ &~ {\rm otherwise}
\end{array}
\right.
\end{equation*}
and
$$
d_{\tt \Omega_{2}}( {\cal X} ) =
\min\{\|
({\cal X} - {\cal W} \|_{F} : {\cal W} \in {\tt \Omega_{2}} \}.
$$
It follows the deﬁnition of $f( {\cal X} )$ that
$f( {\cal X}^* ) = 0$ and ${\cal X}^*$ is a critical point of $f$.

Recall that ${\tt \Omega_{1}}$ and ${\tt \Omega_{2}}$ are two $C^{\infty}$ manifolds.
Then $f$ is locally Lipschitz continuous,
i.e., for each ${\cal X}
\in {\tt \Omega_{1}}$,
there is an $r>0$ such that
$f$ is Lipschitz continuous on the open ball of center
${\cal X}$ with radius $r$.
Assume that $({\tt V},\psi)$ is a local smooth chart of ${\tt \Omega_{1}}$ around ${\cal X}^*$ with
bounded ${\tt V}$.
Therefore, $f({\tt V})$ is bounded by the fact that $f$ is local Lipschitz continuous.
According to the definition of semi-algebraic function \cite{li2016douglas},
we can deduce that $f \circ \psi^{-1}$ is also semi-algebraic.
Then the Kurdyka-{\L}ojasiweicz inequality \cite{Attouch2010} for $f \circ \psi^{-1}$ holds for
$\bar{\cal W} :=\psi( {\cal X}^{*} )$.
It implies that there exist $\eta \in (0,\infty)$ and a concave function $\tau:[0,\eta]$ such that
\begin{enumerate}
\item[(i)] $\tau(0)=0$;
\item[(ii)] $\tau$ is $C^{1}$;
\item[(iii)] $\tau'>0$ on $(0,\eta)$;
\item[(iv)] for all ${\cal W} \in \psi({\tt V})= {\tt U}$
with $
f \circ\psi^{-1}( \bar{\cal W} )
< f\circ\psi^{-1}( {\cal W} )
< f\circ\psi( \bar{\cal W}) )+\eta$,
we have
$$
\tau' ( f\circ\psi^{-1}( {\cal W} ) -f\circ\psi^{-1}( \bar{\cal W}) ) \
\textrm{dist}(0, \partial (f\circ\psi^{-1})( {\cal W} ) \geq 1.
$$
\end{enumerate}
Moreover, $\tau$ is analytic on ${\tt V}$, thus $D(\psi)$ is continuous on ${\tt V}$, where $D$
is the differential operator. For every compact subset ${\tt K}$ in ${\tt V}$, there exists
$C_{\tt K}:=\sup_{ {\cal W} \in {\tt K}} \| D ( \psi( {\cal W} ) ) \|$,
where $\|\cdot\|$ denotes the operator norm.
Suppose that ${\tt V}^{'}$ is an open set containing
${\cal X}^{*}$ in ${\tt V}$ such that ${\tt K}=cl({\tt V}^{'}) \subset int ({\tt V})$ is compact
($cl({\tt V}^{'})$ denotes the closure of ${\tt V}^{'}$ and $int({\tt V})$ denotes the $interior$ of
${\tt V}$).
Then, for every ${\cal X} \in {\tt V}^{'}$ with
$f({\cal X}^{*}) < f({\cal X}) < f({\cal X}^{*}) +\eta$,
we have
\begin{equation}\label{Thm1_eq1}
C_{\tt K}\tau'( f({\cal X}) - f({\cal X}^{*}) ) \
\textrm{dist}(0,\hat{\partial} ( f({\cal X}) ) \geq 1,
\end{equation}
where $\hat{\partial} f({\cal X})$ is the Fr\'{e}chet subdifferential of $f$.
We see that the Kurdyka-{\L}ojasiweicz inequality is satisfied for $f$ given in \eqref{f1}.

 Here we construct a function $\tau=t^{1-\theta}$ which satisfies  (i)-(iv). Because $f({\cal X}^{*})=0$, (\ref{Thm1_eq1}) becomes
\begin{equation*}
C_{\tt K}\tau'( f({\cal X})) \
\textrm{dist}(0,\hat{\partial} ( f({\cal X}) ) \geq 1.
\end{equation*}
Since $\tau'(t)=(1-\theta)t^{-\theta}$,
there always exists a neighborhood ${\tt U}$ of ${\cal X}^{*}\in {\tt \Omega_{1}} \cap {\tt \Omega_{2}}$ such that
$
C_{\tt K}(1-\theta)|f( {\cal X})|^{-\theta} \|g\|_{F}\geq 1
$, i.e.,
\begin{align}\label{nnew2}
|f( {\cal X})|^{-\theta} \|g\|_{F}\geq c, \quad \text{with} \ c=\frac{1}{C_{\tt K}(1-\theta)},
\end{align}
for all ${\cal X}\in {\tt \Omega_{1}} \cap {\tt U}$ and every
$g\in \hat{\partial} f( {\cal X} )$.

In Algorithm 1,  we construct the following sequences according to Definition \ref{separa}:
$$
{\cal Z} \rightarrow {\cal Y}^+ \rightarrow {\cal Z}^+.
$$
Here ${\cal Y}^+$ is the projection $\pi_{\tt \Omega_1}({\cal Z})$
and
${\cal Z}^+$ is the projection $\pi_{\tt \Omega_2}({\cal Y}^+)$  with  $\pi_{\tt \Omega_1}({\cdot})$ and $\pi_{\tt \Omega_2}({\cdot})$ being defined as \eqref{pro1} and \eqref{pro2}, respectively.  Suppose ${\cal Z}$ and ${\cal Z}^+$ are in ${\tt U}$,  ${\cal Y}^+ \in {\tt U}  \cap {\tt \Omega_{1}}$, we get the proximal normal cone
to ${\tt \Omega_{1}}$ at ${\cal Y}^+$:
$$
{\tt N}_{\tt \Omega_{1}}^{p}({\cal Y}^+) =
\{ \lambda {\cal V} : \lambda \geq 0, {\cal Y}^+ \in \pi_{\tt \Omega_{1}}( {\cal Y}^+ + {\cal V} ) \}.
$$
According to the definition of Fr\'{e}chet subdifferential,
${\cal G} \in \hat{\partial} f( {\cal Y}^+ )$
if and only if
${\cal G}  = {\cal V} + {\cal Y}^+ - {\cal Z}^+$ for every ${\cal V} \in
{\tt N}_{\tt \Omega_{1}}^{p}({\cal Y}^+)$ of the form
${\cal V} = \lambda ({\cal Z} - {\cal Y}^+)$.

Note that  ${\cal Y}^+ \in \pi_{\tt \Omega_{1}}( {\cal Z} )$, from (\ref{f1}), we have
$f ( {\cal Y}^+ ) = \frac{1}{2}d_{\tt \Omega_{2}}^{2}( {\cal Y}^+)$.  Substitute $f ( {\cal Y}^+ )$  into \eqref{nnew2} gives
$$
2^{\theta}d_{\tt \Omega_{2}}( {\cal Y}^+ )^{-2\theta}
\| \lambda( {\cal Z} - {\cal Y}^+ ) +
( {\cal Y}^+ - {\cal Z}^+ ) \|_{F}\geq c>0,
$$
for every $\lambda\geq 0$.
It follows that
\begin{align}\label{nnew3}
d_{\tt \Omega_{2}}( {\cal Y}^+ )^{-2\theta}
\min_{\lambda\geq 0} \| \lambda( {\cal Z} - {\cal Y}^+ ) +
( {\cal Y}^+ - {\cal Z}^+ ) \|_{F}
\geq 2^{-\theta} c.
\end{align}
Let the angle $\alpha$ be the angle between the iterations, which can be defined as the angle between
${\cal Z} - {\cal Y}^+$ and ${\cal Z}^+ - {\cal Y}^+$.

Let us consider two cases.

(i) When $\alpha\leq \pi/2$,
$$
\min_{\lambda\geq 0} \|
\lambda( {\cal Z} - {\cal Y}^+ ) +
( {\cal Y}^+ - {\cal Z}^+ ) \|_{F} = \| {\cal Y}^+ - {\cal Z}^+ \|_F \sin \alpha,
$$
Substitute it into (\ref{nnew3}), then
$$
\frac{\sin \alpha}{d_{\tt \Omega_{2}}( {\cal Y}^+ )^{2\theta-1}}\geq 2^{-\theta}c.
$$
Note that $1-\cos\alpha\geq \frac{1}{2}\sin^{2}\alpha$, we have
\begin{equation}\label{eq2}
\frac{1-\cos \alpha}{d_{\tt \Omega_{2}}( {\cal Y}^+ )^{4\theta-2}}\geq 2^{-2\theta-1}c^{2}.
\end{equation}
when the numerator tends to $0$,
the denominator has to go to zero, which implies that $4\theta-2>0$, i.e., $\theta>\frac{1}{2}$. Therefore, we get ${\tt \Omega_{2}}$ intersects ${\tt \Omega_{1}}$ separably with exponent
$\omega=4\theta-2 \in (0,2)$, the corresponding
constant can be set as $c^{'}=2^{-2\theta-1}c^{2}$.

(ii)
When $\alpha>\pi/2$, we have $\cos\alpha<0$, i.e.,
$1-\cos \alpha\geq1$.
The infimum in \eqref{nnew3} is attained at $\lambda=0$. \eqref{nnew3} becomes $d_{\tt \Omega_{2}}( {\cal Y}^+ )^{1-2\theta}\geq 2^{-\theta}c$. Therefore,
$$
d_{\tt \Omega_{2}}( {\cal Y}^+ )^{2-4\theta}\geq 2^{-2\theta}c^{2}>2^{-2\theta-1}c^{2}.
$$
\eqref{eq2} is also satisfied.
According to Definition \ref{separa},   ${\tt \Omega_{2}}$ intersects ${\tt \Omega_{1}}$ separably.

On the other hand,
${\tt \Omega_{1}}$ intersects ${\tt \Omega_{2}}$ separably can be proved by using the
similar argument.

Moreover, ${\tt \Omega_{1}}$ is prox-regularity at ${\cal X}^*$ with arbitrary $\sigma\in [0,1)$, hence ${\tt \Omega_{1}}$ is H$\ddot{o}$lder regular with respect to ${\tt \Omega_{2}}$ at ${\cal X}^*$ .  It follows from Theorem 2
that
there exists a neighborhood ${\tt U}$ of ${\cal X}^*$ such that every sequence of alternating projections that enters ${\tt U}$ converges to ${\cal Z}^* \in {\tt \Omega_1} \cap {\tt \Omega_2}$.
The convergence rate is $\| {\cal Z}^{(s)} - {\cal Z}^* \|_{F}=O(s^{-\frac{2-\omega}{2\omega}})$
and $\| {\cal Y}^{(s)} - {\cal Z}^* \|_{F} = O(s^{-\frac{2-\omega}{2\omega}})$ with
$\omega \in(0,2)$. The result follows.

In the next section, we test our method and nonnegative tensor decomposition methods on the synthetic data and real-world data, and show the performance of the proposed alternating projections
method is better than the others.

\section{Experimental Results}\label{Experiment}

\subsection{Compared methods}

The state-of-the-art methods for nonnegative tensor decompositions   are used as follows.
\begin{itemize}
\item Nonnegative Tucker decomposition (NTD):\\
NTD-HALS: An HALS algorithm \cite{zhou2012fast}\\
NTD-MU: A multiple updating algorithm \cite{zhou2012fast}\\
NTD-BCD: A block coordinate descent method \cite{xu2013block}\\
NTD-APG: An accelerated proximal gradient algorithm \cite{zhou2012fast}
\end{itemize}

We also compare the proposed model with well known nonnegative  CANDECOMP/PARAFAC decomposition (NCPD), that is, given a tensor $\mathcal{A}\in \mathbb{R}^{n_1\times n_2\times \cdots\times n_m}_+$,
\begin{equation}\label{CP}
\begin{split}
&\min\|\mathcal{A}-\sum^Z_{z=1}\lambda_z \mathbf{a}^{z,1} \otimes \mathbf{a}^{z,2} \otimes
\cdots \mathbf{a}^{z,m}\|, \\
&\mbox{s.t.}\quad \mathbf{A}^t= \left(
              \begin{array}{ccc}
                 \mathbf{a}^{1,t}  &\cdots & \mathbf{a}^{Z,t} \\
                \end{array}
            \right)\geq 0,\quad \lambda=\left(
              \begin{array}{ccc}
                 \lambda_1  &\cdots & \lambda_Z \\
                \end{array}
            \right)\geq 0,\quad t=1,\cdots,m.
\end{split}
\end{equation}
The state-of-the-art methods for NCPD model are presented as follows.
\begin{itemize}
\item Nonnegative CP decomposition (NCPD):\\
NCPD-HALS: A hierarchical ALS algorithm \cite{cichocki2007hierarchical,cichocki2009fast}\\
NCPD-MU: A fixed point (FP) algorithm with multiplicative updating \cite{welling2001positive}\\
NCPD-BCD: A block coordinate descent (BCD) method \cite{xu2013block}\\
NCPD-APG: An accelerated proximal gradient method \cite{zhang2016fast}\\
NCPD-CDTF: A block coordinate descent method \cite{shin2016fully}\\
NCPD-SaCD: A saturating coordinate descent method with Lipschitz continuity-based element importance updating rule \cite{balasubramaniam2020efficient}
\end{itemize}

In the following, we list the computational cost of these methods in Table \ref{comp_cost}.
The cost of the proposed NLRT method per iteration
is about the same as that of NTD-type methods.
As they involve the calculation of nonnegative vectors only, the cost of NCP-type methods
per iteration is smaller than that of the proposed NLRT method.

\begin{table}[htbp]
\renewcommand\arraystretch{0.8}
\setlength{\tabcolsep}{3pt}
\centering\scriptsize
\caption{The computational cost.}
\begin{tabular}{ccp{6cm}}
\toprule
        Method      &   Complexity & Details of most expensive compuations\\
    \midrule
    NCPD-mu     &   $O(mr\Pi_{j=1}^{m} n_{j})$   & Khatri-Rao product and unfolding matrices times Khatri-Rao product.  \\\midrule
    NCPD-HALS   &   $O(mr\Pi_{j=1}^{m} n_{j})$& Khatri-Rao product and unfolding matrices times Khatri-Rao product.\\\midrule
    NCPD-BCD    &   $O(mr\Pi^m_{j=1} n_{j})$ &  Khatri-Rao product and unfolding matrices times Khatri-Rao product.\\\midrule
    NCPD-APG    &   $O(mr\Pi_{j=1}^{m} n_{j})$   & Khatri-Rao product and  unfolding matrices times Khatri-Rao product.  \\\midrule
    NCPD-CDTF   &   $O(m^2r\Pi_{j=1}^{m} n_{j})$   & Khatri-Rao product of rank one components and vectors times Khatri-Rao product.  \\\midrule
    NCPD-SaCD   &   $O(mr\Pi_{j=1}^{m}n_{j})$   & Khatri-Rao product and  unfolding matrices times Khatri-Rao product.  \\\midrule\midrule
    NTD-MU      &   $O(\sum^m_{i=1}\Pi^m_{j\neq i} n_{j}r^2_i)$ & MU on unfolding matrices $\{\mathbf{A}_{i}\}^m_{k=1}$.\\\midrule
    NTD-HALS    &   $O(\sum^m_{i=1}\Pi^m_{j\neq i} n_{j}r_i)$ & HALS on unfolding matrices $\{\mathbf{A}_{i}\}^m_{k=1}$.\\\midrule
    NTD-BCD     &   $O(\sum^m_{i=1}\Pi^m_{j\neq i} n_{j}r_i(r_i+n_i))$ &The tensor-matrix multiplication  and the matrix multiplication between the $i$-th unfolding matrix of $\mathcal{G}\times_{j=1,j\neq i} \mathbf{U}^{(j)}$ and its transpose.\\\midrule
    NTD-APG  & $O(\sum^m_{i=1}\Pi^m_{j\neq i} n_{j}r^2_i)$    & The tensor-matrix multiplications among a) the $i$-th factor matrix  b) the transpose of the $i$-th unfolding matrix of $\mathcal{G}\times_{j=1,j\neq i\mathbf{U}^{(j)}}$ and c) the $i$-th unfolding matrix of  $\mathcal{G}\times_{j=1,j\neq i\mathbf{U}^{(j)}}$.\\\midrule
    \midrule
    NLRT        &   $O((\Pi_{j=1}^{m} n_{j}) \sum_{i=1}^{m}r_{i})$   &SVDs of unfolding matrices  $\{\mathbf{A}_{i}\}^m_{k=1}$. \\

\bottomrule

\end{tabular}%
\label{comp_cost}
\end{table}

The stopping criterion of the proposed method and other comparison methods is that
the relative difference between successive iterates is smaller than $10^{-5}$.
All the experiments are conducted on Intel(R) Core(TM) i9-9900K CPU@3.60GHz with 32GB of RAM using Matlab.   Throughout this section, we mainly test the low-rank approximation ability of our method and nonnegative tensor decomposition methods with given rank. That is the CP rank and the multilinear rank are manually prescribed. As for real-world applications, we suggest two adaptive rank adjusting strategies proposed in \cite{xu2015parallel}. The basic idea is to use a large (or a small) value of the rank as the initial guess and adaptively decrease (or increase) the rank based on the QR decomposition of unfolding matrices as the algorithm iterates. The effectiveness of those strategies have been revealed in \cite{xu2015parallel}.

\subsection{Synthetic Datasets}\label{sec:synth}

We first test different methods on synthetic datasets.
We generate two kinds of synthetic data as follows:

\begin{itemize}
\item Case 1 (Noisy nonnegative low-rank tensor):
We generate low rank nonnegative tensors by two steps.
First, a core tensor of the size $r_1\times r_2\times \cdots\times r_m$ (i.e.,
multilinear rank is $(r_1,r_2,\cdots,r_m)$) and $m$
factor matrices of sizes $n_i \times r_i$ ($i = 1,2,\cdots,m$) are generated with entries uniformly distributed in $[0,1]$.
Second, these factor matrices are multiplied to the core tensor via the tensor-matrix product
to generate the low rank nonnegative tensors of size $n_1\times n_2\times \cdots\times n_m$, and each entry  is element-wisely divided by the maximal value, being in the interval of $[0,1]$.
Finally, we add Gaussian noise to generate noisy tensors with different signal-to-noise ratios (SNR)\footnote{To avoid making the entries negative, we first simulate a noise with standard normal distribution, and then set the negative noisy value to be 0. The SNR in dB is defined as $\text{SNR}_\text{dB} = 20\log_{10}\frac{\|\mathcal{X}_\text{groundtruth}\|_F}{\|\text{Noise}\|_F}$.}.

\item Case 2 (Nonnegative random tensor):
We randomly generate
nonnegative tensors of the size $n_1\times n_2\times\cdots\times n_m$ where their entries follow a uniform distribution in between 0 and 1.
The tensor data is fixed once generated and the low rank minimizer is unknown in this setting.
For CP decomposition methods, the CP rank is set to be $r$.
For Tucker decomposition methods, the multilinear rank is set to be
$[r,r,\cdots,r]$.
\end{itemize}

  It is not straightforwardly easy
to make the comparison between the NCPD methods with low multilinear rank based methods fairly, owing to different definitions of the rank.
For NCPD methods, determining the CP rank of a given tensor is NP-hard \cite{kolda2009tensor}. Fortunately, we have that,
given the multilinear rank ($r_1,r_2,\cdots,r_m$) of a tensor,
its CP rank cannot be larger than $\prod_{k=1}^{m} r_k$.
Therefore, in Case 1, we select the CP rank in
the NCPD methods from a set with three candidates, i.e.,
$\{\prod_{k=1}^{m}r_k, \sum_{k=1}^{m}r_k, \max_i{r_i} \}$. Then, we report the best relative approximation error in the NCPD methods.
We believe this makes the comparison with the NCPD methods possible and fair to a certain extent in Case 1.  In Case 2, we set the CP rank as $r$ for NCPD methods when the multilinear rank is $[r,r,\cdots,r]$. In this situation, the results by NCPD methods only reflect the representation ability of these NCPD methods.

We report the relative approximation error\footnote{Defined as
$
\frac{\|\mathcal{X}_\text{estimated}-\mathcal{X}_\text{groundtruth}\|_F}{\|\mathcal{X}_\text{groundtruth}\|_F}.
$}
to quantitatively measure the approximation quality. The ground truth tensor is the generated tensor without noise.
The relative approximation errors of the results by different methods in Case 1 are reported in
Table \ref{tab-test1}. The reported entries of all the comparison methods
in the table are the average values together with the standard deviations of ten trails with different random initial guesses in CP decomposition vectors and Tucker decomposition matrices. However,
the results of the proposed NLRT method are deterministic when the input nonnegative tensor is fixed.
We can see from Table \ref{tab-test1} that the proposed NLRT method achieves the best
performance and it is also quite robust to different noise levels.

In table \ref{tab-test1-time}, we report the average running time of each method. For the tensors with the same size, NCPD methods and NTD methods respectively need the same computation time for different noise levels.
The running time of our NLRT becomes less when the SNR value is larger. This indicates that our method could converge faster with less noise.
Meanwhile, we can see that as the number of total elements in the tensor grows from $10^6$ ($100\times100\times100$) to $2.3\times10^7$ ($30\times30\times30\times30\times30$), the running time of all the methods increases rapidly.
Since that our method involves computations of SVD, whose computation complexity grows cubic to the dimension, our superior of efficiency is obvious for smaller data.
\begin{table}[!t]
\renewcommand\arraystretch{0.8}
\setlength{\tabcolsep}{2pt}
\centering\scriptsize
\caption{  The mean values (and standard deviations) of relative approximation errors of the results by different methods in Case 1. The \textbf{best} values are highlighted in bold. (The mean values and standard deviations are shown in percentages.)}\vspace{-1.5mm}
\begin{tabular}{cc ccccccccc ccccc cccc}\toprule
 \multicolumn{8}{c}{Tensor size: $100\times 100\times 100$}&\multicolumn{8}{c}{Multilinear rank: $[5,5,5]$}\\\midrule
 SNR &\multirow{2}{*}{Noisy}&& \multicolumn{6}{l}{NCPD-}    
 && \multicolumn{4}{l}{NTD-} 
 &&   \multirow{2}{*}{NLRT}   \\
 (dB)&                      && MU       & HALS  & APG   & BCD   & CDTF  & SaCD  && MU   & HALS  & APG   & BCD     &&                           \\\midrule
 \multirow{2}{*}{30} &\multirow{2}{*}{  3.16} &&  2.86&  2.78&  2.74&  2.74&  2.75&  2.95&&  2.84&  2.75&  2.73&  2.75&&\multirow{2}{*}{\bf   2.73}\\
                         &&&  (0.01)&  (0.01)&  (0.00)&  (0.00)&  (0.00)&  (0.11)&&  (0.05)&  (0.03)&  (0.00)&  (0.01)&&                            \\\midrule
 \multirow{2}{*}{40} &\multirow{2}{*}{  1.00} &&  1.21&  1.01&  0.87&  0.87&  0.87&  1.31&&  1.11&  1.00&  0.88&  0.95&&\multirow{2}{*}{\bf   0.86}\\
                         &&&  (0.02)&  (0.01)&  (0.00)&  (0.00)&  (0.00)&  (0.14)&&  (0.15)&  (0.12)&  (0.01)&  (0.05)&&                            \\\midrule
 \multirow{2}{*}{50} &\multirow{2}{*}{  0.32} &&  0.91&  0.59&  0.28&  0.28&  0.28&  0.97&&  0.67&  0.50&  0.33&  0.51&&\multirow{2}{*}{\bf   0.27}\\
                         &&&  (0.02)&  (0.03)&  (0.00)&  (0.00)&  (0.00)&  (0.21)&&  (0.19)&  (0.16)&  (0.01)&  (0.07)&&    \\\midrule
 SNR &\multirow{2}{*}{Noisy}&& \multicolumn{6}{l}{NCPD-}    
 && \multicolumn{4}{l}{NTD-} 
 &&   \multirow{2}{*}{NLRT}   \\
 (dB)&                      && MU       & HALS  & APG   & BCD   & CDTF  & SaCD  && MU   & HALS  & APG   & BCD     &&                           \\\midrule
 \multirow{2}{*}{30} &\multirow{2}{*}{  3.16} &&  2.94&  2.77&  2.74&  2.75&  2.75&  2.99&&  2.68&  2.67&  2.67&  2.67&&\multirow{2}{*}{\bf   2.66}\\
                         &&&  (0.01)&  (0.00)&  (0.00)&  (0.01)&  (0.01)&  (0.16)&&  (0.01)&  (0.00)&  (0.00)&  (0.00)&&                            \\\midrule
 \multirow{2}{*}{40} &\multirow{2}{*}{  1.00} &&  1.40&  0.96&  0.87&  0.88&  0.88&  1.41&&  0.91&  0.88&  0.86&  0.85&&\multirow{2}{*}{\bf   0.84}\\
                         &&&  (0.03)&  (0.02)&  (0.00)&  (0.02)&  (0.01)&  (0.13)&&  (0.04)&  (0.02)&  (0.01)&  (0.01)&&                            \\\midrule
 \multirow{2}{*}{50} &\multirow{2}{*}{  0.32} &&  1.14&  0.52&  0.29&  0.34&  0.32&  1.22&&  0.41&  0.35&  0.31&  0.31&&\multirow{2}{*}{\bf   0.27}\\
                         &&&  (0.03)&  (0.03)&  (0.01)&  (0.09)&  (0.04)&  (0.29)&&  (0.05)&  (0.03)&  (0.02)&  (0.03)&&    \\\toprule
\multicolumn{8}{c}{Tensor size: $30\times 30\times 30\times 30\times 30$}&\multicolumn{8}{c}{Multilinear rank: $[2,2,2,2,2]$}\\\midrule
 SNR &\multirow{2}{*}{Noisy}&& \multicolumn{6}{l}{NCPD-}    
 && \multicolumn{4}{l}{NTD-} 
 &&   \multirow{2}{*}{NLRT}   \\
 (dB)&                      && MU       & HALS  & APG   & BCD   & CDTF  & SaCD  && MU   & HALS  & APG   & BCD     &&                           \\\midrule
 \multirow{2}{*}{30} &\multirow{2}{*}{  3.16} &&  2.98&  2.77&  2.74&  2.76&  2.77&  3.08&&  2.48&  2.48&  2.47&  2.48&&\multirow{2}{*}{\bf   2.48}\\
                         &&&  (0.07)&  (0.01)&  (0.00)&  (0.01)&  (0.01)&  (0.17)&&  (0.00)&  (0.01)&  (0.00)&  (0.00)&&                            \\\midrule
 \multirow{2}{*}{40} &\multirow{2}{*}{  1.00} &&  1.11&  0.89&  0.87&  0.90&  0.89&  1.63&&  0.83&  0.81&  0.81&  0.81&&\multirow{2}{*}{\bf   0.80}\\
                         &&&  (0.06)&  (0.02)&  (0.00)&  (0.03)&  (0.02)&  (0.33)&&  (0.07)&  (0.01)&  (0.01)&  (0.01)&&                            \\\midrule
 \multirow{2}{*}{50} &\multirow{2}{*}{  0.32} &&  0.75&  0.38&  0.28&  0.35&  0.39&  1.19&&  0.28&  0.28&  0.27&  0.28&&\multirow{2}{*}{\bf   0.25}\\
                         &&&  (0.09)&  (0.03)&  (0.01)&  (0.05)&  (0.09)&  (0.38)&&  (0.02)&  (0.01)&  (0.01)&  (0.03)&&      \\\bottomrule
\end{tabular}%
\label{tab-test1}
\end{table}
\begin{table}[!t]
\renewcommand\arraystretch{0.8}
\setlength{\tabcolsep}{3pt}
\centering\scriptsize
\caption{  The averaged running time (in seconds) of different methods in Case 1.}\vspace{-1.5mm}
\begin{tabular}{cc ccccccccc ccccc cccc}\toprule
 \multicolumn{7}{c}{Tensor size: $100\times 100\times 100$}&\multicolumn{8}{c}{Multilinear rank: $[5,5,5]$}\\\midrule
 SNR && \multicolumn{6}{l}{NCPD-}    
 && \multicolumn{4}{l}{NTD-} 
 &&   \multirow{2}{*}{NLRT}   \\
 (dB)& & MU       & HALS  & APG   & BCD   & CDTF  & SaCD  && MU   & HALS  & APG   & BCD     &&                           \\\midrule
 30  & &    6.6 &   0.5 &   5.0 &   2.2 & 1.1 &   6.1 &&  12.2 &  12.7 &  16.6 &   5.4 &&    0.5\\
 40  & &    6.4 &   0.5 &   5.0 &  13.0 &14.7 &   6.3 &&  12.1 &  12.8 &  16.7 &   5.4 &&    0.4\\
 50  & &    6.6 &   0.5 &   9.2 &  13.0 &15.3 &   6.8 &&  12.2 &  12.9 &  16.6 &   5.5 &&    0.3\\\toprule
\multicolumn{7}{c}{Tensor size: $50\times 50\times 50\times 50$}&\multicolumn{8}{c}{Multilinear rank: $[3,3,3,3]$}\\\midrule
 SNR && \multicolumn{6}{l}{NCPD-}    
 && \multicolumn{4}{l}{NTD-} 
 &&   \multirow{2}{*}{NLRT}   \\
 (dB)& & MU       & HALS  & APG   & BCD   & CDTF  & SaCD  && MU   & HALS  & APG   & BCD     &&                           \\\midrule
 30  & &   61.5 &  40.3 & 108.2 & 112.0 &139.7 &  36.2 &&  16.2 &  16.0 &  23.1 &  33.7 &&   11.9\\
 40  & &   60.2 &  39.0 & 106.9 & 112.3 &137.6 &  35.8 &&  16.3 &  15.9 &  23.1 &  41.1 &&    8.3\\
 50  & &   60.2 &  47.9 & 106.3 & 103.0 &147.0 &  36.2 &&  16.0 &  16.1 &  22.7 &  40.9 &&    5.8\\\toprule
\multicolumn{7}{c}{Tensor size: $30\times 30\times 30\times 30\times 30$}&\multicolumn{8}{c}{Multilinear rank: $[2,2,2,2,2]$}\\\midrule
 SNR && \multicolumn{6}{l}{NCPD-}    
 && \multicolumn{4}{l}{NTD-} 
 &&   \multirow{2}{*}{NLRT}   \\
 (dB)& & MU       & HALS  & APG   & BCD   & CDTF  & SaCD  && MU    & HALS   & APG   & BCD     &&                           \\\midrule
 30  & &  249.4 & 159.7 & 215.4 & 218.7 &195.3 & 127.2    && 115.0 &  119.6 & 120.4 & 102.6 &&  106.2\\
 40  & &  219.5 & 192.7 & 150.7 & 215.3 &224.4 & 130.4    && 112.6 &  117.3 & 118.9 & 123.3 &&   78.6\\
 50  & &  233.4 & 184.1 & 127.9 & 243.4 &324.2 & 129.3    && 114.9 &  119.5 & 121.1 & 131.0 &&   56.1\\\bottomrule
\end{tabular}
\label{tab-test1-time}
\end{table}

\begin{figure}[!t]
\scriptsize\setlength{\tabcolsep}{0.5pt}
\renewcommand\arraystretch{0.8}
\centering
\begin{tabular}{c}
\includegraphics[width=0.70\linewidth]{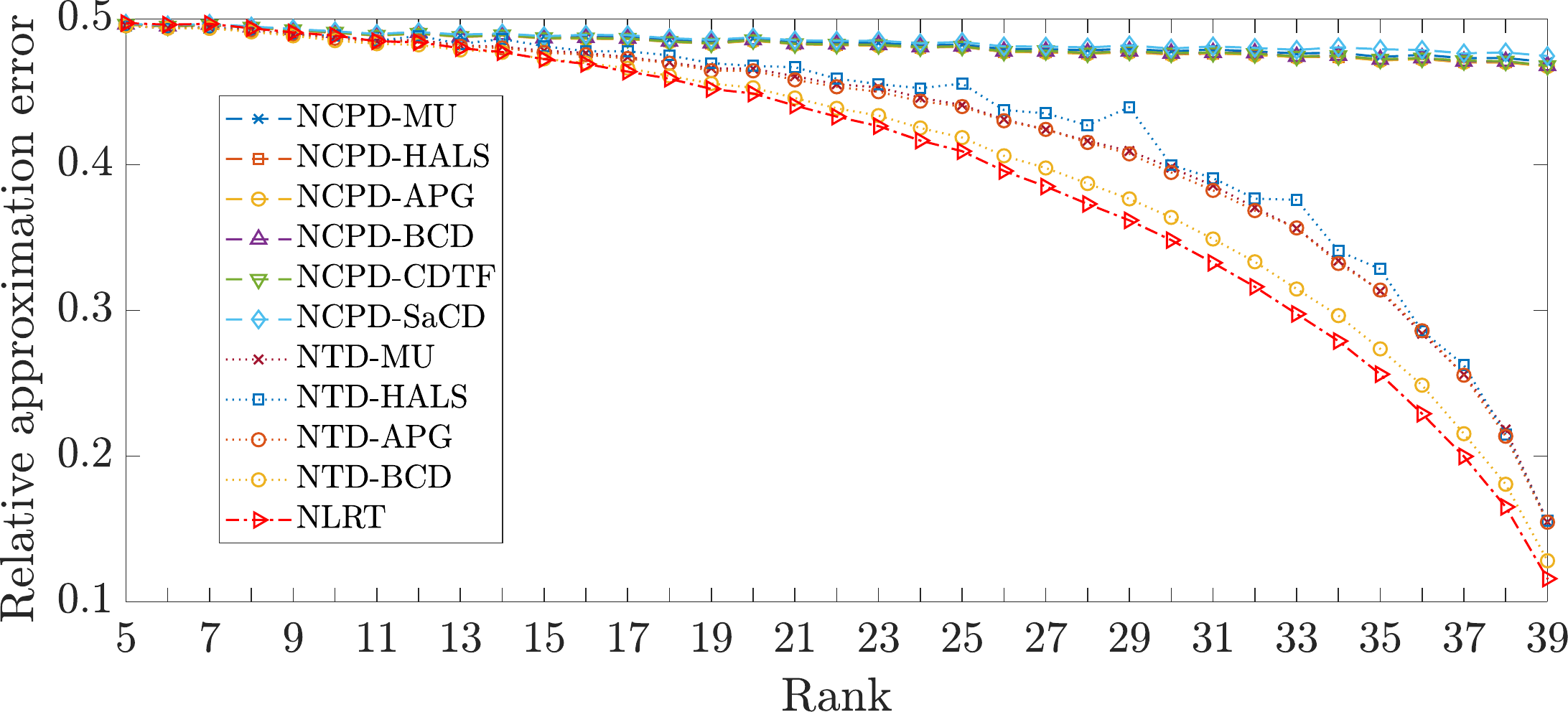}\\
(a) Tensor size: $40\times 40\times 40$\\\\
\includegraphics[width=0.70\linewidth]{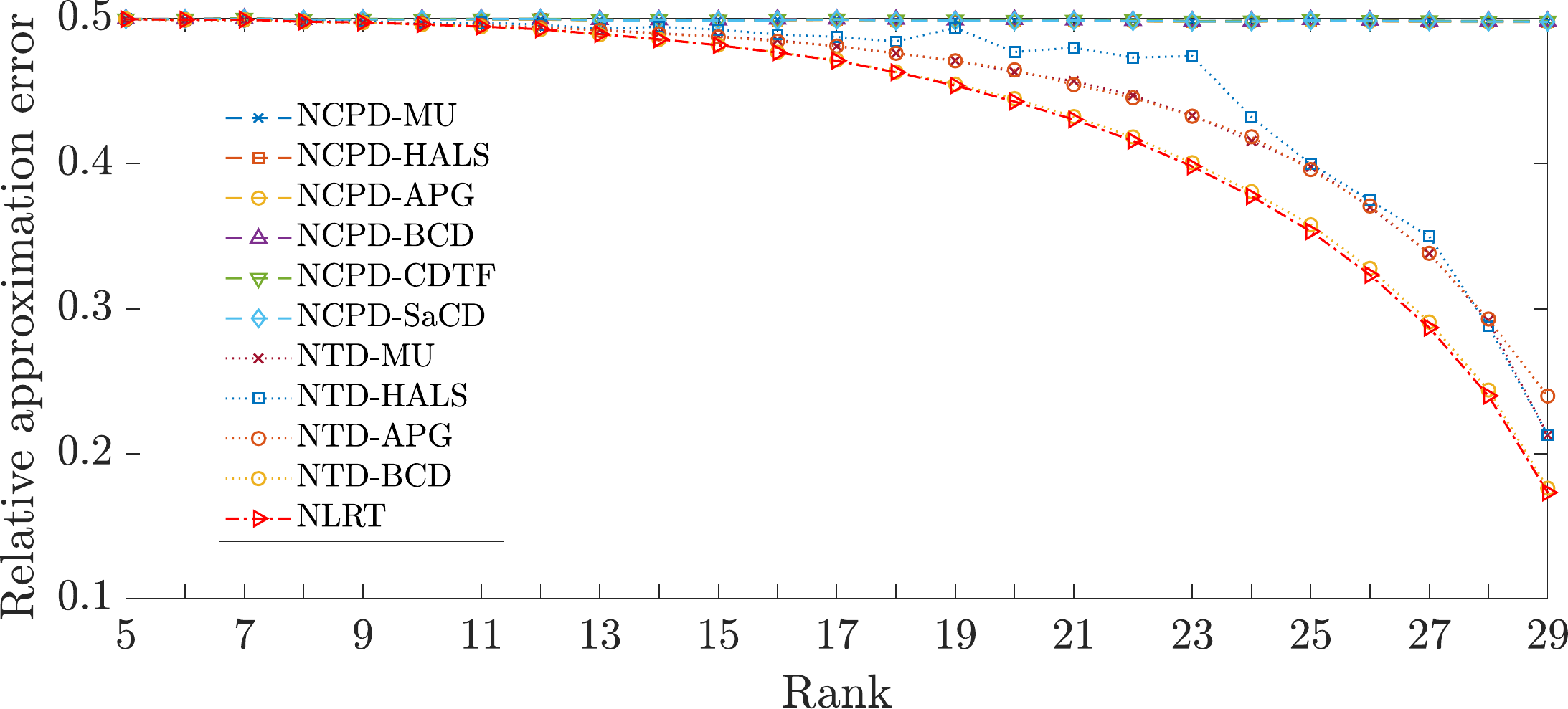}\\
(b)  Tensor size: $30\times 30 \times 30\times30$
\end{tabular}\vspace{-3mm}
\caption{  Relative approximation errors on the randomly generated tensors in Case 2 with respect to the different rank settings.}
  \label{fig-syn-rand2}\vspace{-3mm}
\end{figure}

The relative approximation errors in Case 2 with respect to different values of $r$ are plotted in Fig. \ref{fig-syn-rand2}. As we stated, the tensor of a given size will be fixed once generated. Then, for different values of $r$, we run each algorithm 10 times and the averaged values are plotted.
From Fig. \ref{fig-syn-rand2}, we can see that the proposed NLRT method and NTD-BCD perform
better than the other methods. For the tensors of the size $40\times40\times40$, the superior of our method over NTD-BCD is obvious when the rank is in between 27 and 39.

\begin{figure}[!t]
\setlength{\tabcolsep}{3pt}
\renewcommand\arraystretch{0.5}
\centering
\begin{tabular}{ccc}

\includegraphics[width=0.48\linewidth]{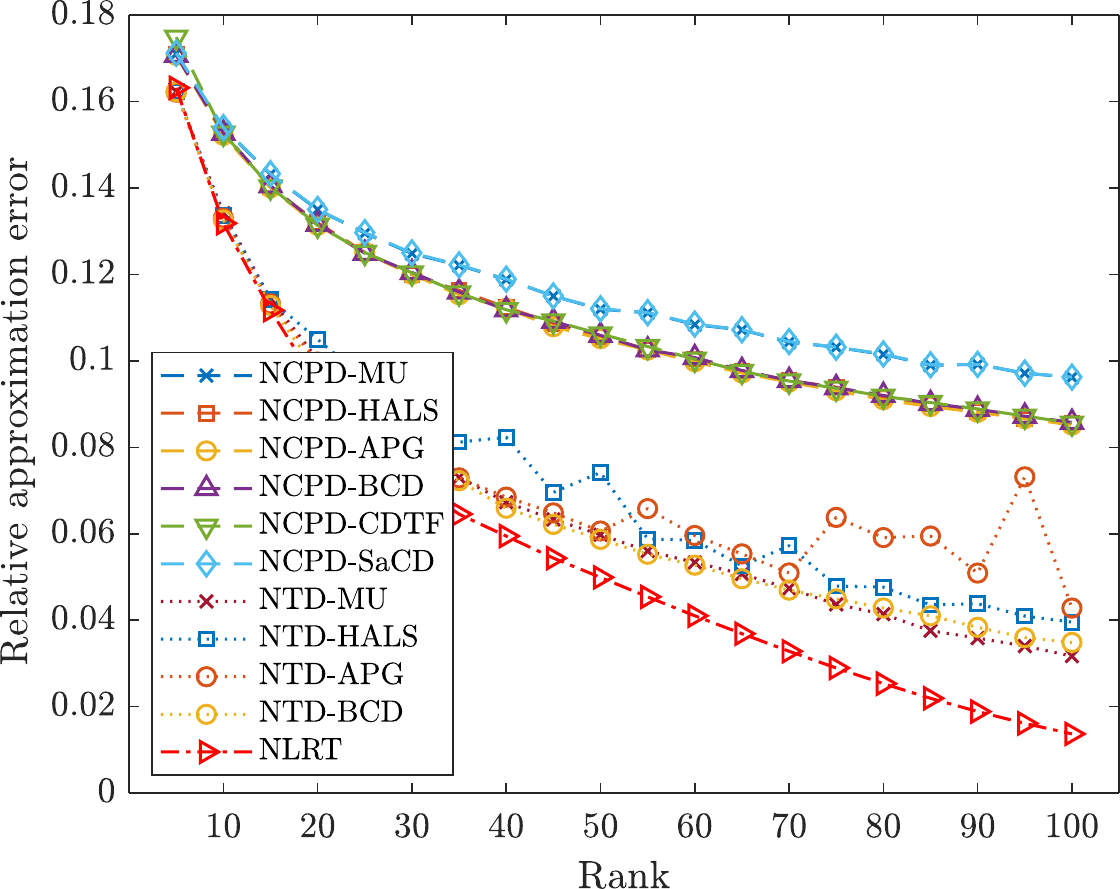}&&
\includegraphics[width=0.48\linewidth]{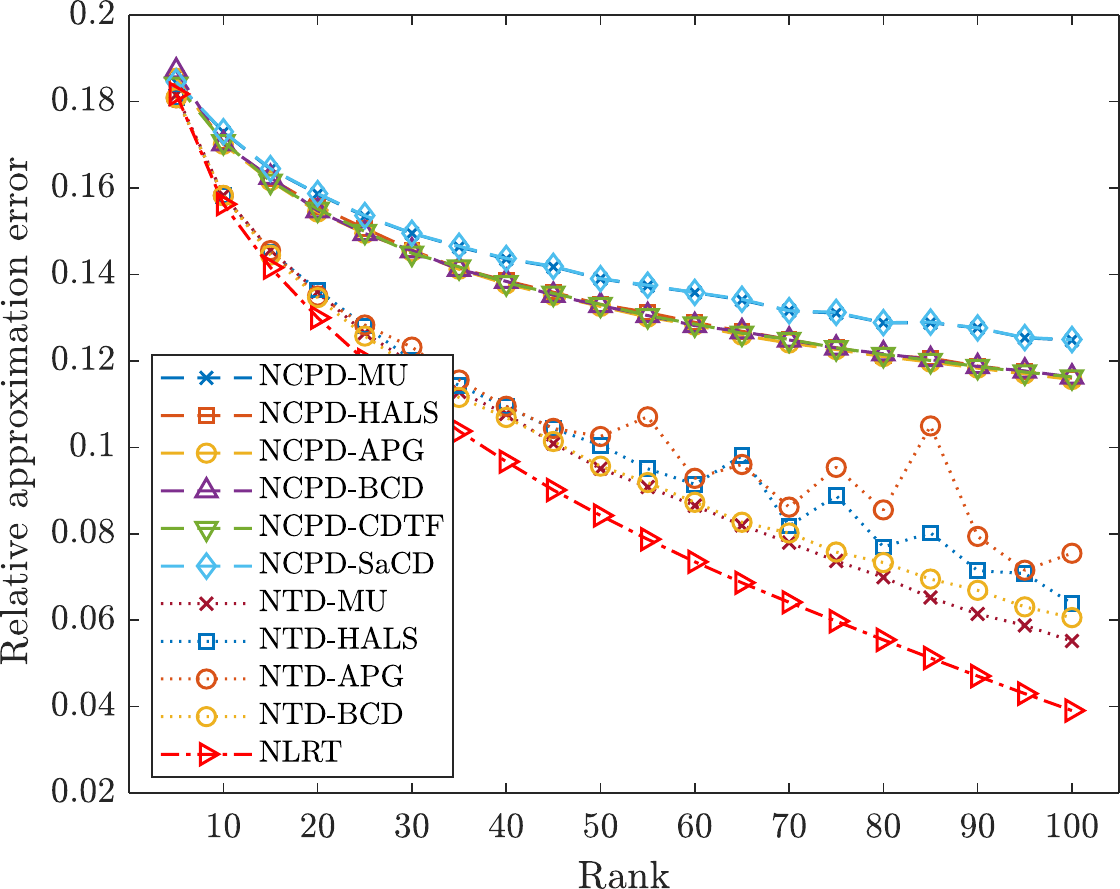}\\
(a) ``foreman''&& (b) ``coastguard''\\\\\\
\includegraphics[width=0.48\linewidth]{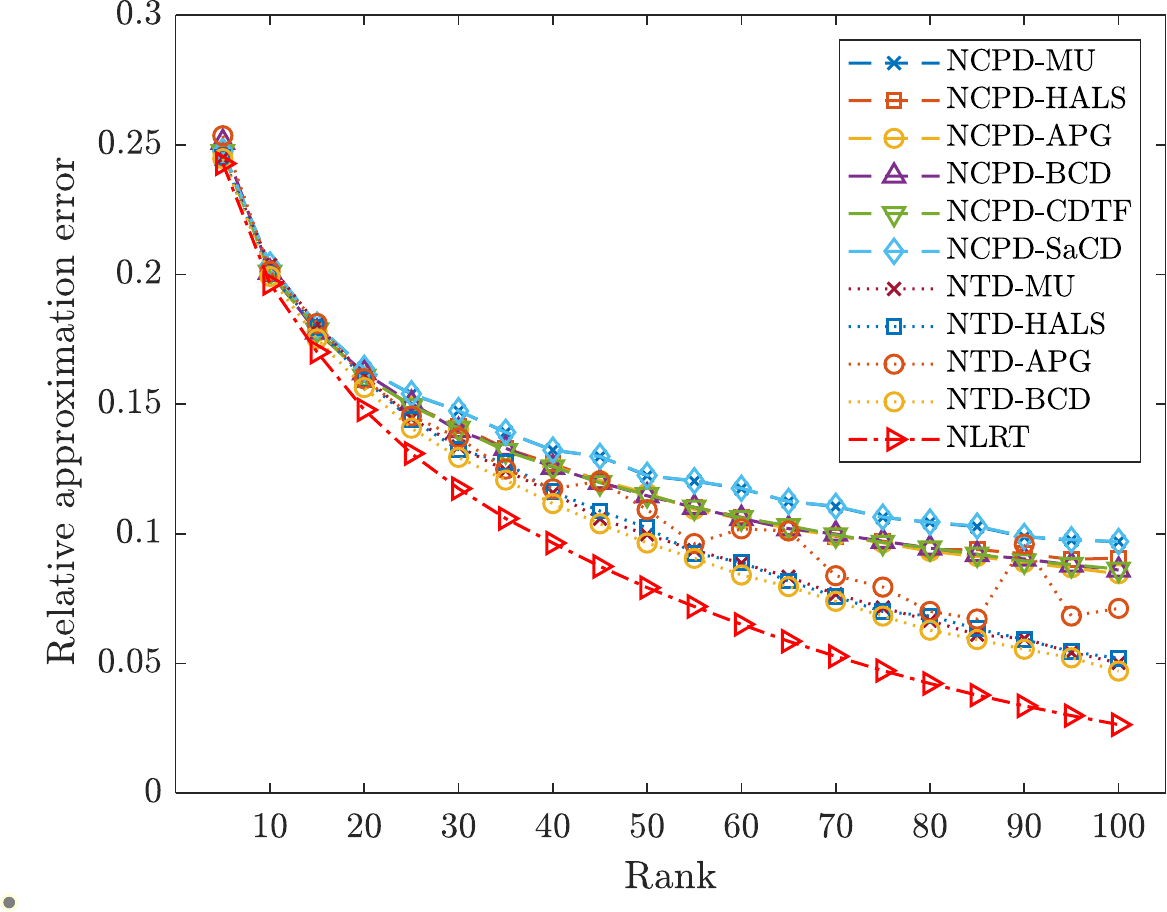}&&
\includegraphics[width=0.48\linewidth]{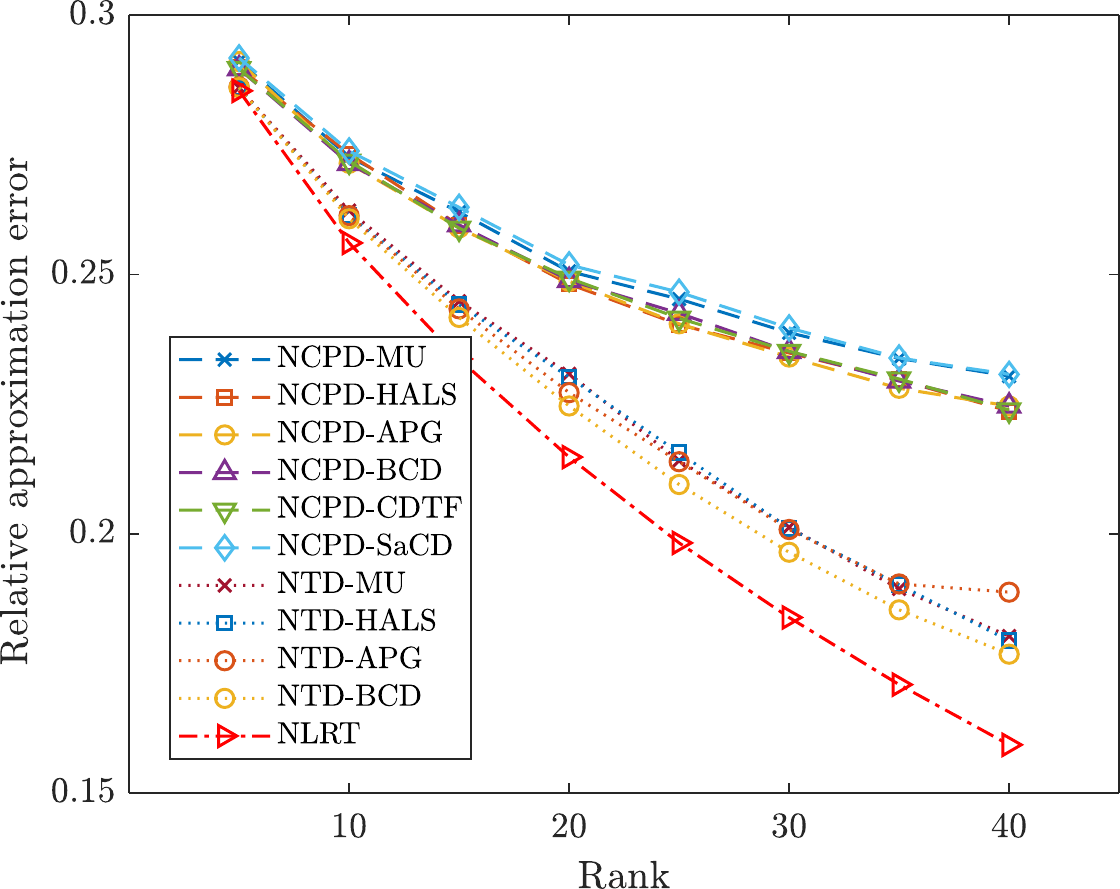}\\
(c) ``news''&& (d) ``basketball''
\end{tabular}\vspace{-3mm}
\caption{  Relative approximation errors on 4 videos (100 frames) with respect to the different rank settings.}
  \label{fig-video-approxi}
\end{figure}

\subsection{Video Data}

In this subsection, we select 5 videos\footnote{Videos are available at \url{http://trace.eas.asu.edu/yuv/} and \url{https://sites.google.com/site/jamiezeminzhang/publications}.} to test our method on the task of approximation.
Three videos (respectively named ``foreman'', ``coastguard'', and ``news'') are of the size $144\times176\times100$ (height$\times$width$\times$frame) and one (named ``basketball'') is of the size $44\times256\times40$.   One long video (named ``bridge-far'') of the size $144\times176\times2000$ is also selected to test the approximation ability for large scale data.
Firstly, we set the multilinear rank to be $(r,r,\cdots,r)$ and the CP rank to be $r$.
We test our method to approximate these five videos with varying $r$ from 5 to 100.
Moeover, we add the Gaussian noise to the video ``coastguard'' with different noise levels
($\text{SNR}_\text{dB}$ = 20, 30 ,40, 50), and test the approximation ability of differen methods for the noisy video data.

\begin{figure}[!t]
\setlength{\tabcolsep}{3pt}
\renewcommand\arraystretch{0.5}
\centering
\begin{tabular}{ccc}
\multicolumn{3}{c}{\includegraphics[width=0.69\linewidth]{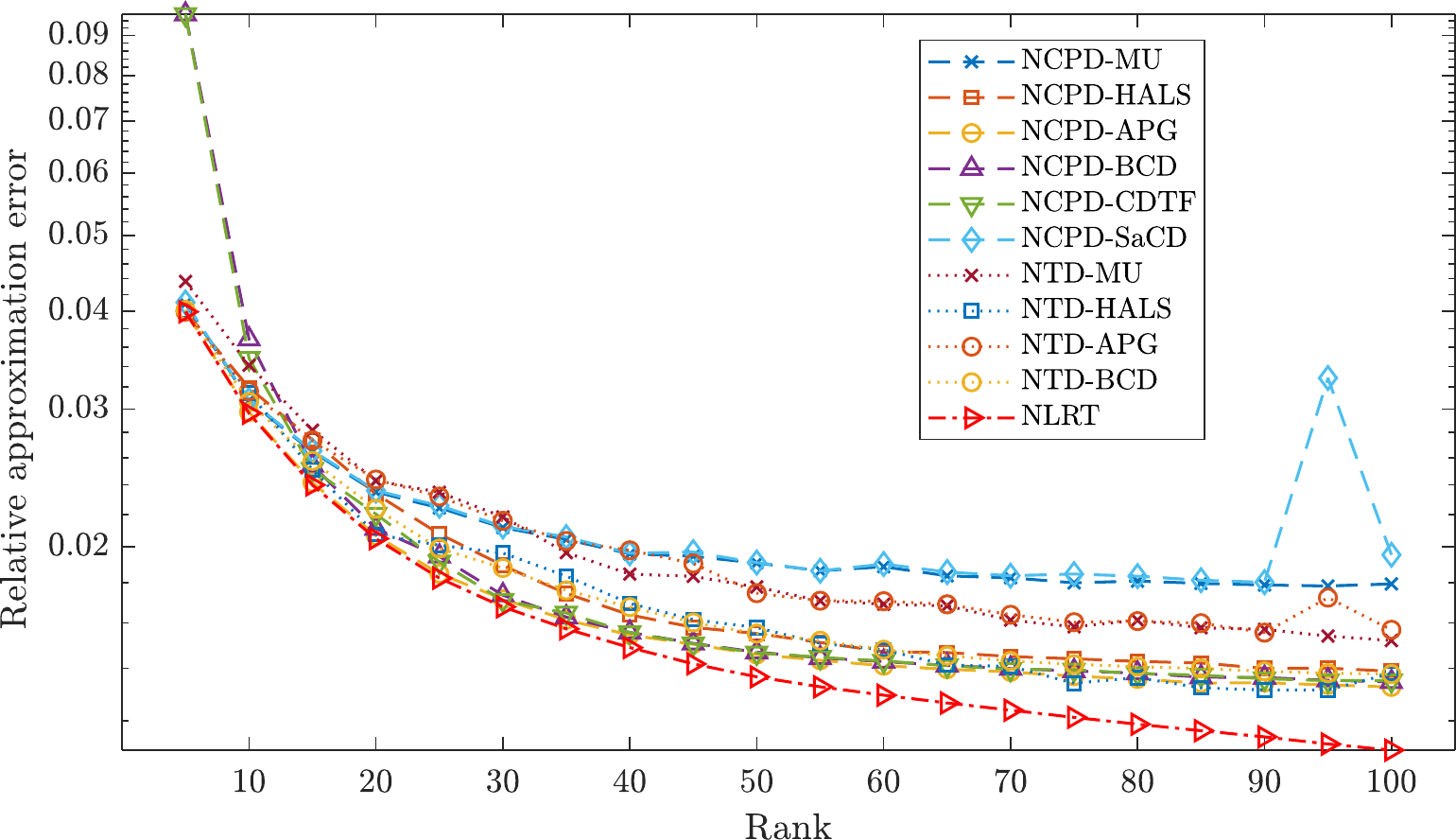}}\\
\end{tabular}\vspace{-3mm}
\caption{  Relative approximation errors on the video ``bridge-far'' (2000 frames) with respect to the different rank settings.}
  \label{fig-video-approxi2}
\end{figure}

\begin{figure}[!t]
\setlength{\tabcolsep}{5pt}
\renewcommand\arraystretch{0.5}
\centering
\begin{tabular}{cc}
\includegraphics[width=0.48\linewidth]{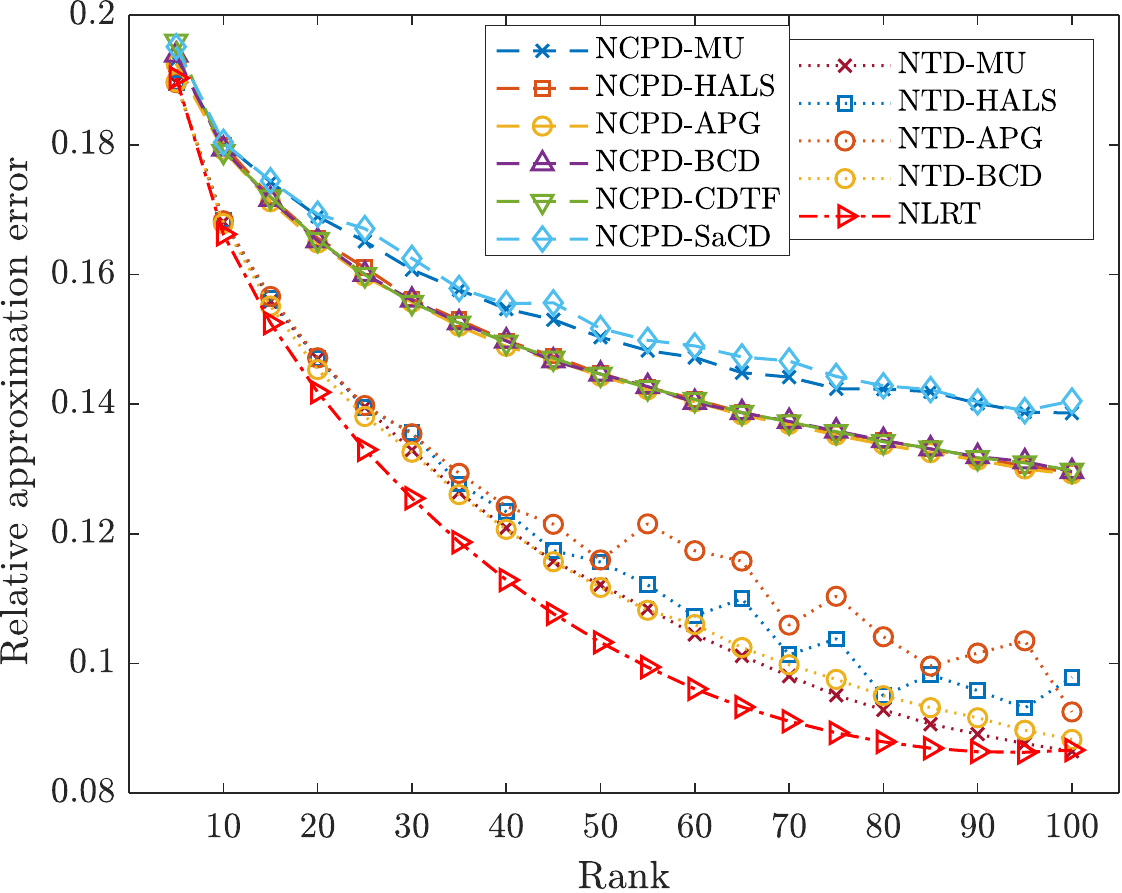}&
\includegraphics[width=0.48\linewidth]{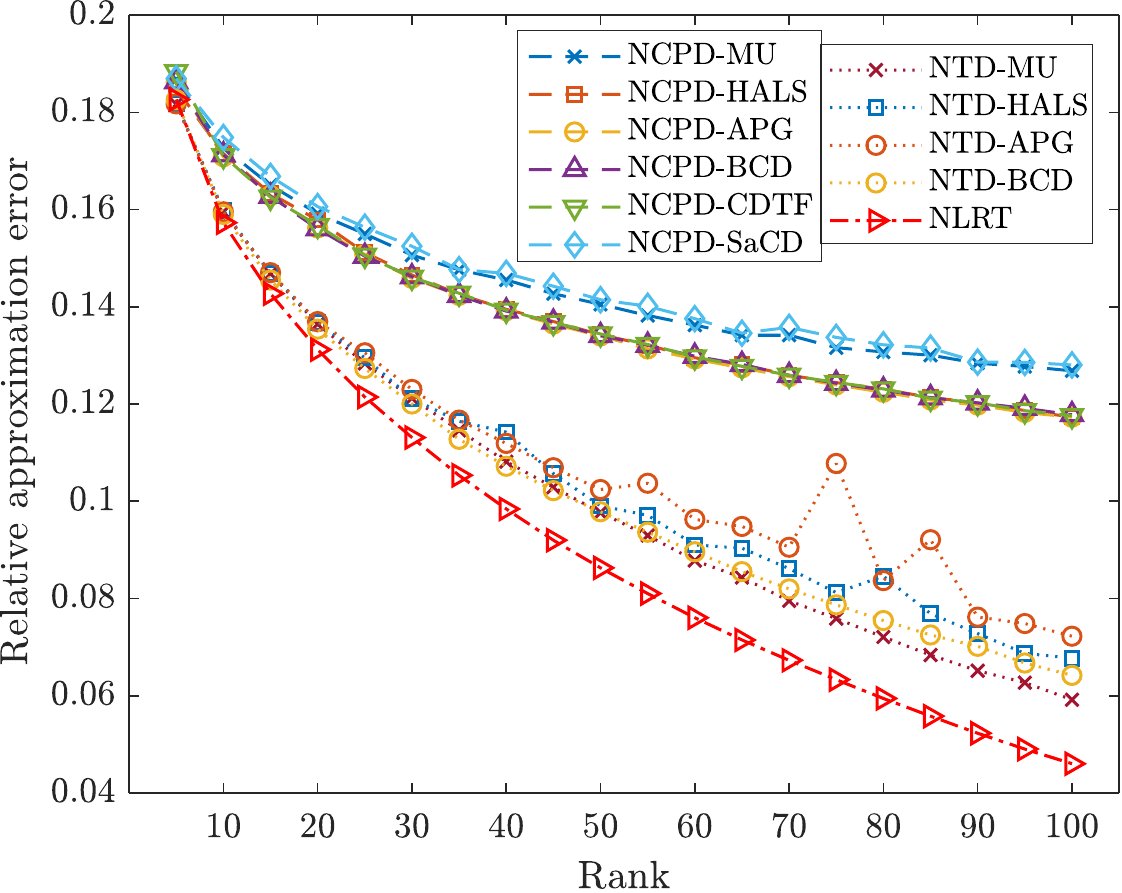}\\
(a) SNR = 20 dB & (b) SNR = 30 dB\\\\\\
\includegraphics[width=0.48\linewidth]{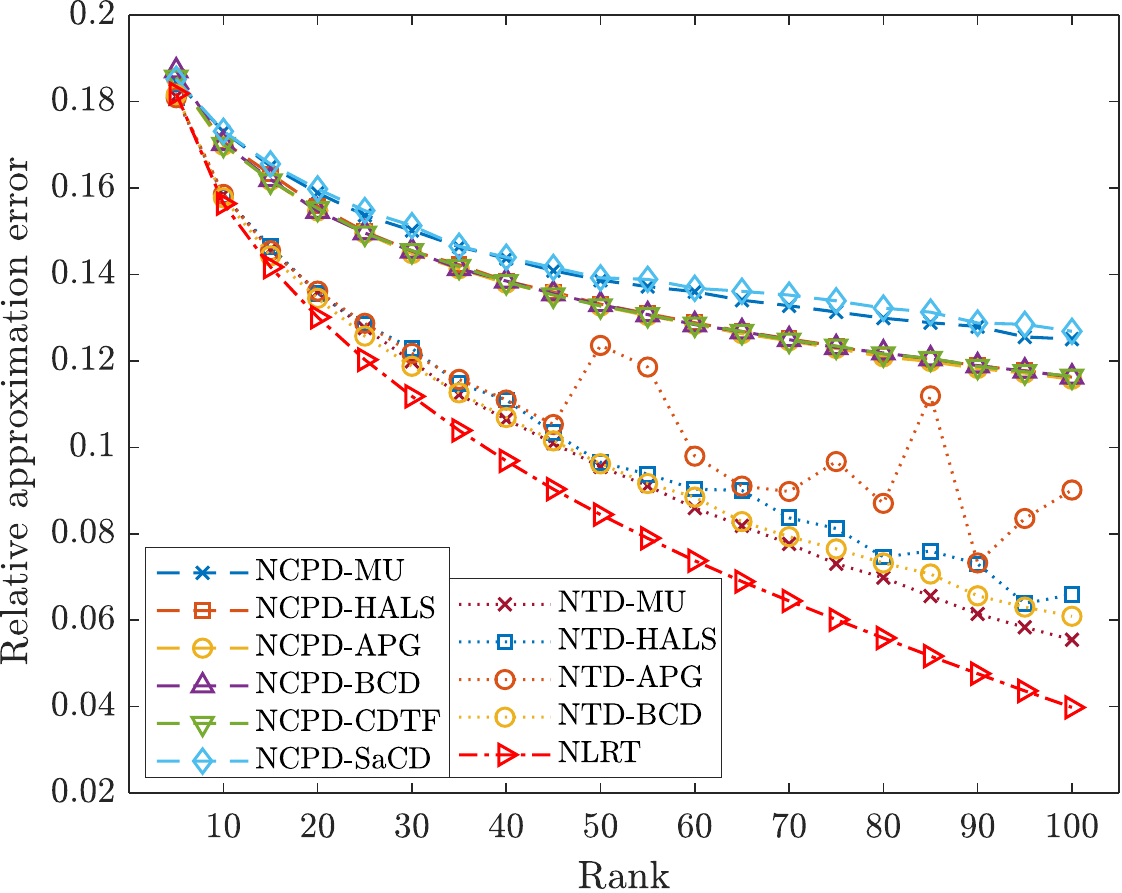}&
\includegraphics[width=0.48\linewidth]{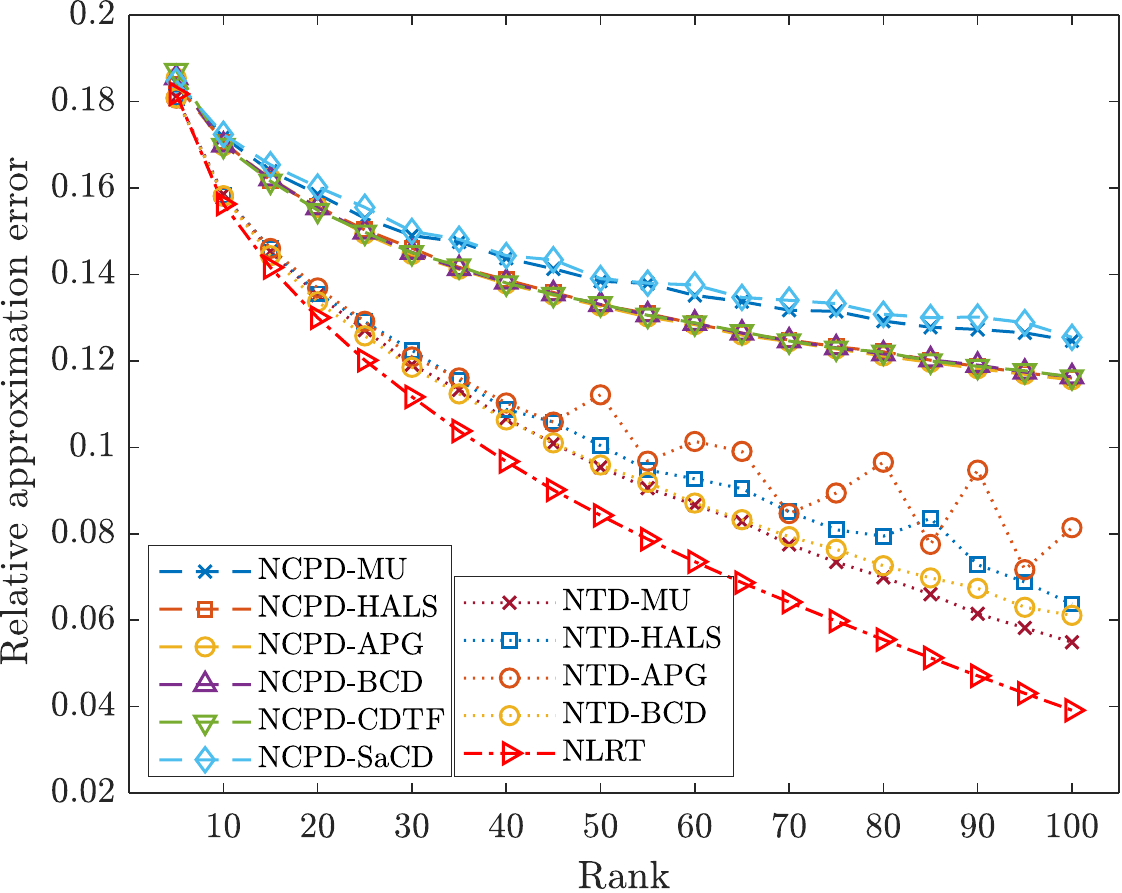}\\
(c) SNR = 40 dB & (d) SNR = 50 dB\\
\end{tabular}\vspace{-1.5mm}\vspace{-1.5mm}
\caption{ Relative approximation errors on the noisy video ``coastguard'' with respect to different rank settings and different noise levels.}
  \label{fig-noisy-video-approxi}
\end{figure}

We plot the relative approximation errors with respect to $r$ on 5 videos in Figs. \ref{fig-video-approxi} and \ref{fig-video-approxi2}. Although, for some videos the approximation errors of the results by NCPD methods are much higher than others, owing to that setting CP rank as $r$ largely constrained the model representation ability, we can still see that the potential of NCPD methods are promising. For example, for the videos ``news'' and ``bridge-far'', NCPD methods are even occasionally superior to NTD methods. Thus, the comparison with NCPD methods provides some insights. From Figs. \ref{fig-video-approxi} and \ref{fig-video-approxi2}, it can be seen that the approximation errors of the results by our method are the lowest. Fig. \ref{fig-noisy-video-approxi} shows the relative approximation errors on the noisy video ``coastguard'' with respect to  $r$. Similarly, our method achieves the lowest approximation errors on the video ``coastguard'' with respect to different rank settings and different noise levels. In Table \ref{tab-video-time}, we list the average running time of each method.

\begin{table}[!t]
\renewcommand\arraystretch{1.0}
\setlength{\tabcolsep}{1.5pt}
\centering\scriptsize
\caption{  The average running time (in seconds) of different methods on video data.}\vspace{-1.5mm}
\begin{tabular}{cc ccccccccc ccccc cccc}\toprule
\multirow{2}{*}{Video} &$\#$& \multicolumn{6}{l}{NCPD-}    
 && \multicolumn{4}{l}{NTD-} 
 &&   \multirow{2}{*}{NLRT}   \\
&frames                 & MU     & HALS  & APG   & BCD   & CDTF  & SaCD  && MU    & HALS  & APG   & BCD   &&       \\\midrule
``foreman''      &100   &     60 &    66 &    55 &    24 &    23 &    45 &&   202 &   188 &   354 &    74 &&     20\\
``news''         &100   &     46 &    46 &    37 &    17 &    16 &    33 &&   176 &   197 &   313 &    59 &&     25\\
``coastguard''   &100   &     35 &    36 &    30 &    13 &    12 &    27 &&   129 &   165 &   228 &    46 &&     15\\
``basketball''   &40    &     16 &    17 &    11 &     3 &     3 &    12 &&    25 &    20 &    34 &    14 &&     15\\
 ``bridge-far''  &2000  &    386 &   173 &   265 &   186 &   188 &   209 &&   183 &   211 &   299 &   511 &&    296\\
\toprule
\multirow{2}{*}{Video} &SNR& \multicolumn{6}{l}{NCPD-}    
 && \multicolumn{4}{l}{NTD-} 
 &&   \multirow{2}{*}{NLRT}   \\
&(dB)                   & MU     & HALS  & APG   & BCD   & CDTF  & SaCD  && MU    & HALS  & APG   & BCD   &&       \\\midrule
\multirow{4}{*}{``coastguard''}
                &   20  &     28 &    29 &    28 &     9 &     9 &    20 &&   135 &   146 &   258 &    46 &&     18\\
                &   30  &     28 &    29 &    29 &    10 &    10 &    20 &&   133 &   145 &   254 &    46 &&     17\\
                &   40  &     28 &    29 &    29 &    10 &    10 &    20 &&   134 &   147 &   255 &    47 &&     16\\
                &   50  &     28 &    29 &    29 &    10 &     9 &    20 &&   136 &   143 &   259 &    46 &&     15\\\bottomrule
\end{tabular}
\label{tab-video-time}
\end{table}

\subsection{Hyperspectral Data}\label{sec:hyper}

In this subsection, we test different methods on the hyperspectral data.
We consider four hyperspectral images (HSIs):
a subimage of Pavia City Center dataset\footnote{\scriptsize Data available at \url{http://www.ehu.eus/ccwintco/index.php?title=Hyperspectral\_Remote\_Sensing\_Scenes}.} of the size $200\times200\times80$ (height$\times$width$\times$spectrum),
a subimage of Washington DC Mall dataset\footnote{\scriptsize Data available at \url{https://engineering.purdue.edu/\~biehl/MultiSpec/hyperspectral.html}.} of the size $200\times200\times160$,
the RemoteImage\footnote{\scriptsize Data available at \url{https://www.cs.rochester.edu/~jliu/code/TensorCompletion.zip}.} of the size $200\times200\times89$,
and a subimage of Curprite dataset\footnote{\scriptsize Data available at \scriptsize \url{https://aviris.jpl.nasa.gov/data/free\_data.html}.} of the size $150\times150\times150$.
 Meanwhile, a hyperspectral video (HSV)\footnote{\scriptsize Data available at \url{http://openremotesensing.net/knowledgebase/hyperspectral-video/}.}  of the size $120\times188\times33\times31$ (height$\times$width$\times$spectrum$\times$time) is also selected to test the effectiveness of different methods on the fourth order tensor.

\begin{figure}[!t]
\scriptsize\setlength{\tabcolsep}{5pt}
\centering\renewcommand\arraystretch{0.8}
\begin{tabular}{cc}
\includegraphics[width=0.48\linewidth]{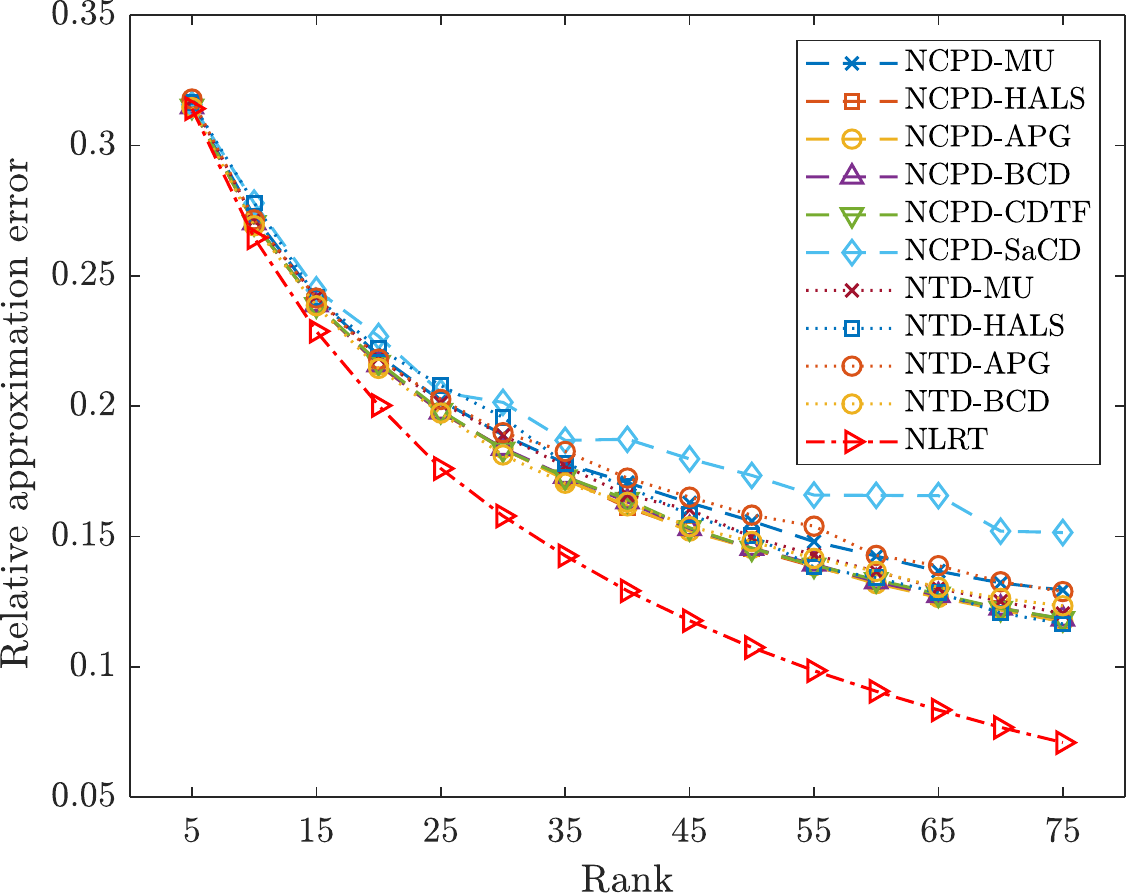}&
\includegraphics[width=0.48\linewidth]{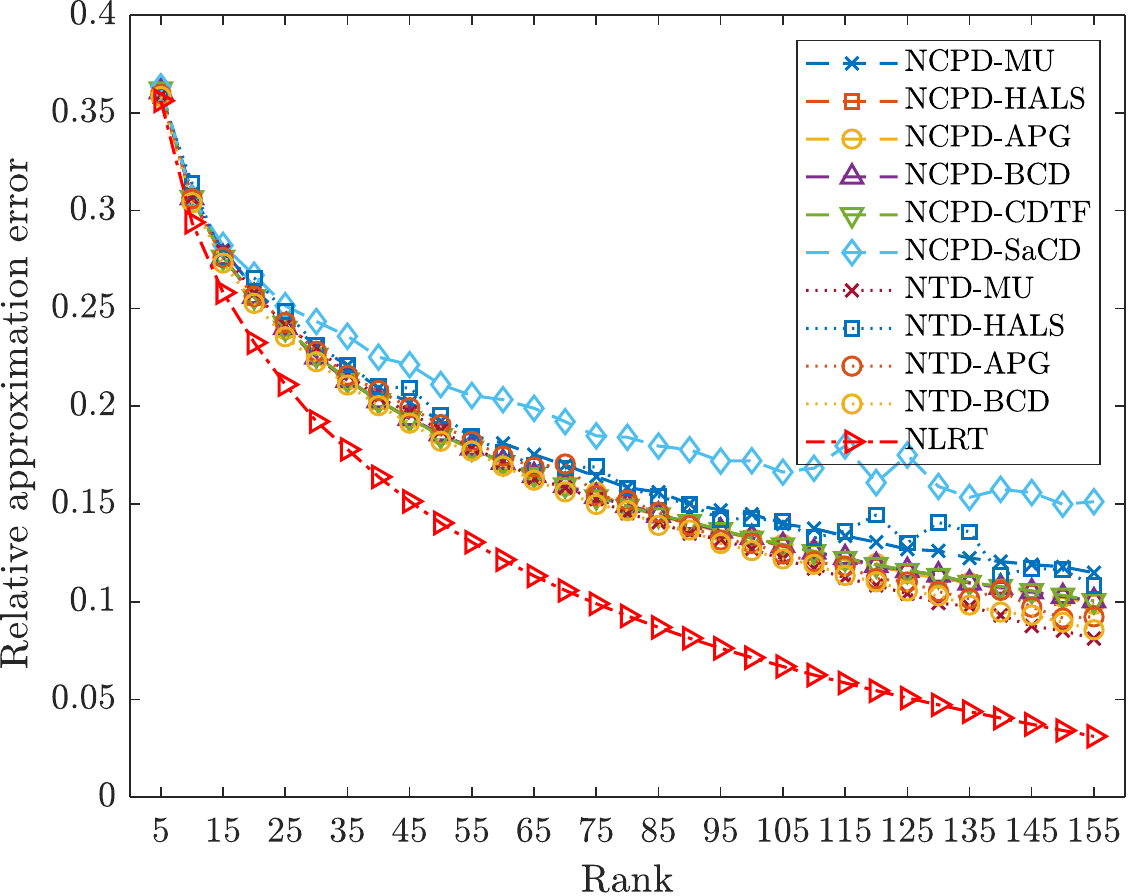}\\
(a) Pavia City Center&(b) Washington DC Mall\\\\\\
\includegraphics[width=0.48\linewidth]{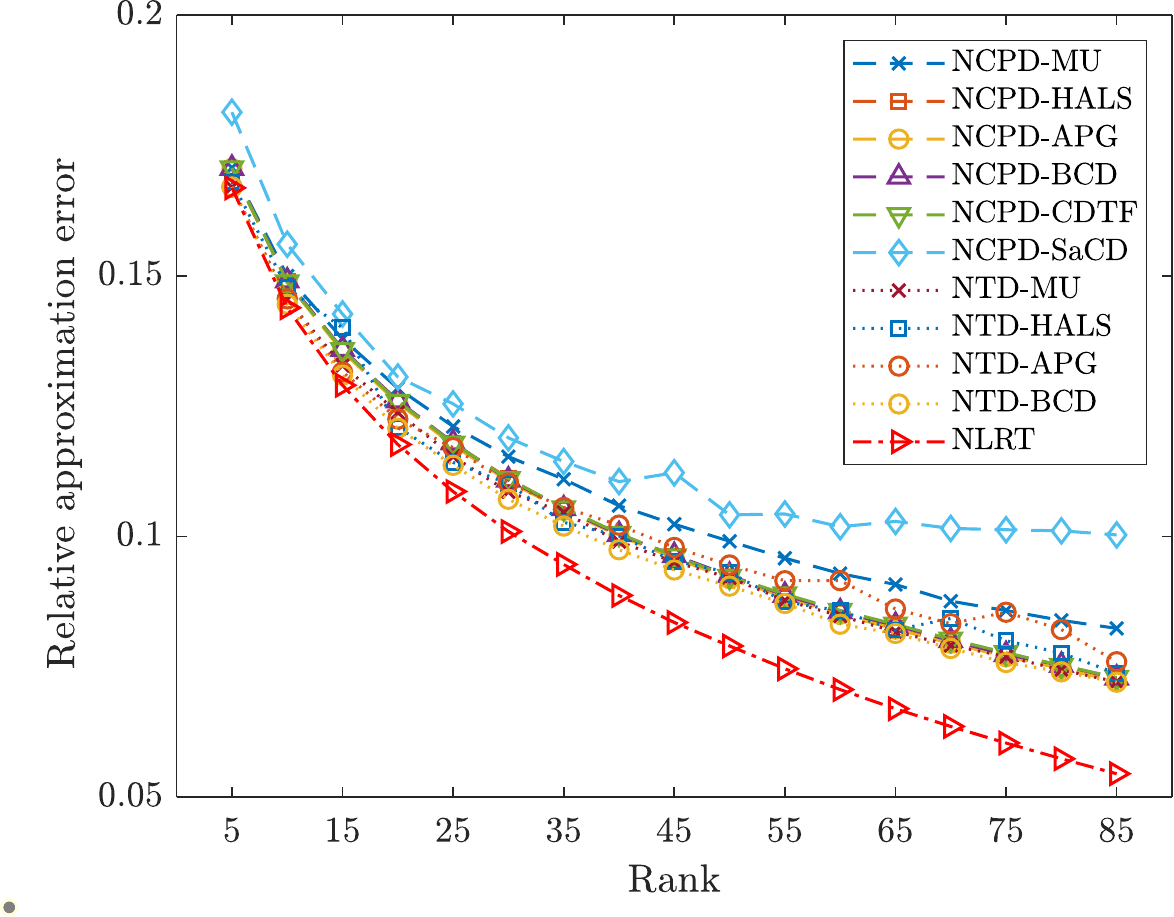}&
\includegraphics[width=0.48\linewidth]{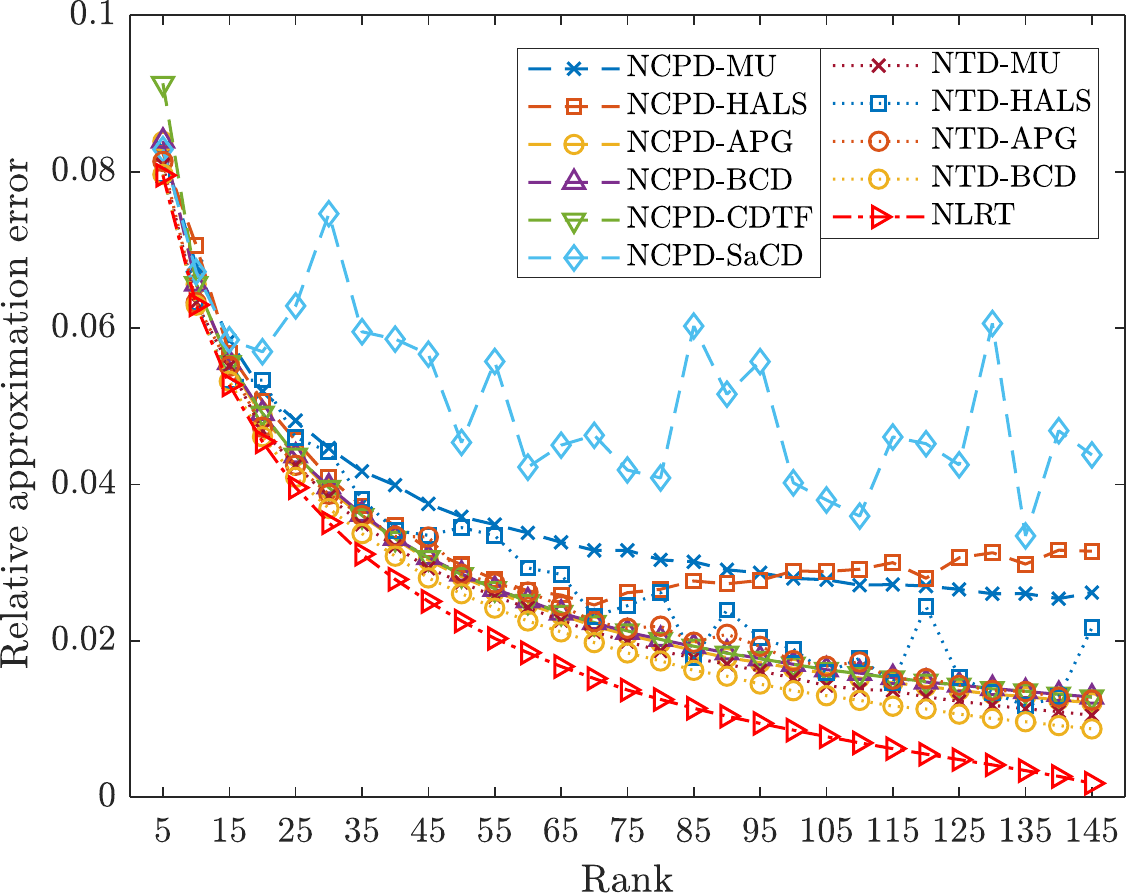}\\
(c) RemoteImage&(d) Curprite\\
\end{tabular}
\caption{  Relative approximation errors on 4 HSIs with respect to the different rank settings.}
  \label{fig-hsi-approxi}
\end{figure}

Figs. \ref{fig-hsi-approxi} and \ref{fig-hsv-approxi} report the relative approximation errors with respect to different values of rank $r$, i.e., multilinear rank = ($r,r,r$) or ($r,r,r,r$) and CP rank = $r$.
It is evidently that the relative approximation errors by our NLRT are the lowest among all the methods.
It is interesting to note that the difference between our method and NTD-BCD (the second best comparison
method) is more significant than that on the synthetic fourth order tensor data.

\begin{figure}[!t]
\small\setlength{\tabcolsep}{0.5pt}
\centering
\begin{tabular}{c}
\includegraphics[width=0.75\linewidth]{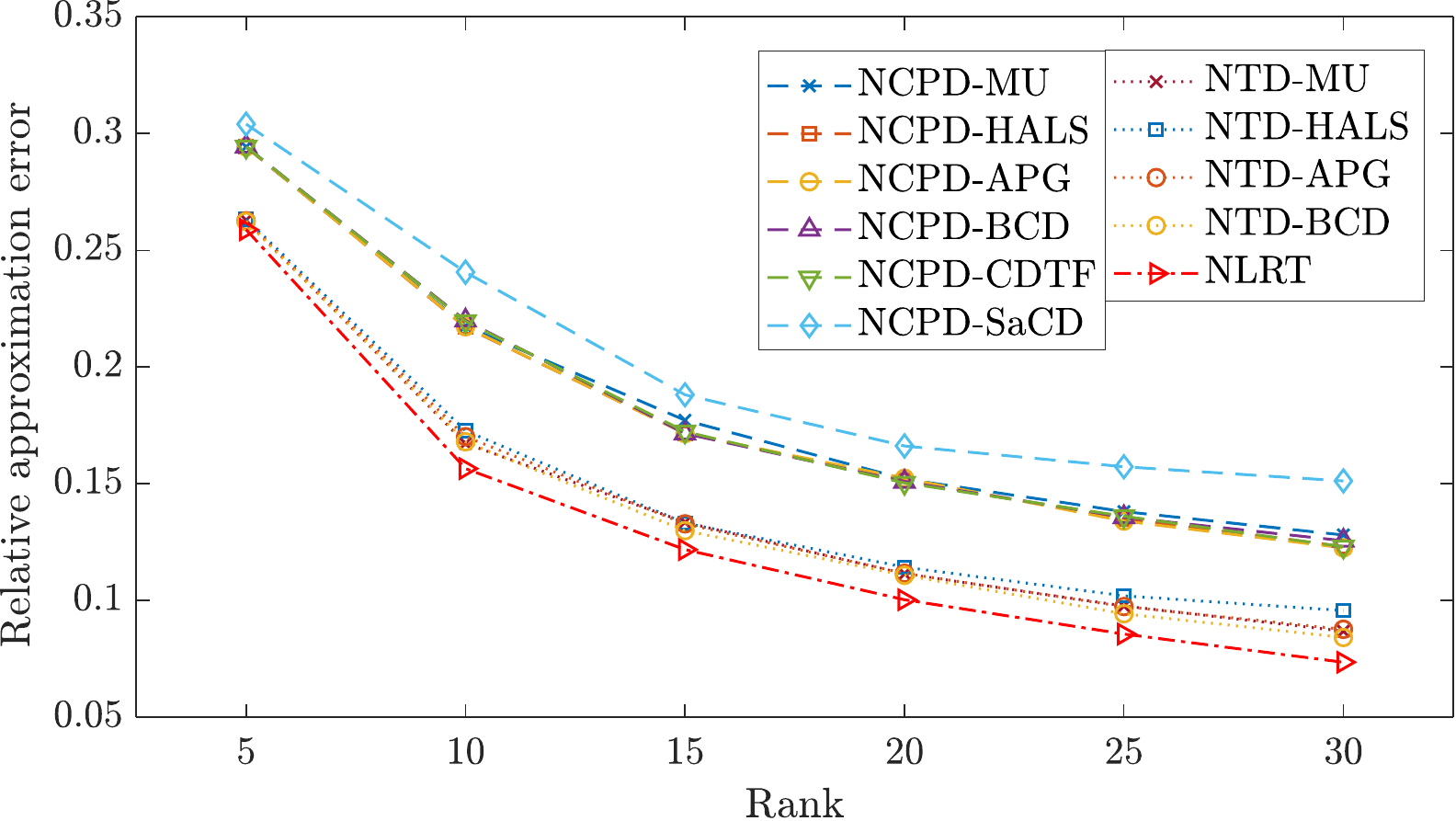}\\
\end{tabular}\vspace{-3mm}
\caption{  Relative approximation errors on the HSV  with respect to the different rank settings.}
  \label{fig-hsv-approxi}
\end{figure}

\begin{figure}[!t]
\setlength{\tabcolsep}{3pt}
\renewcommand\arraystretch{0.8}
\centering%
\includegraphics[width=0.98\textwidth]{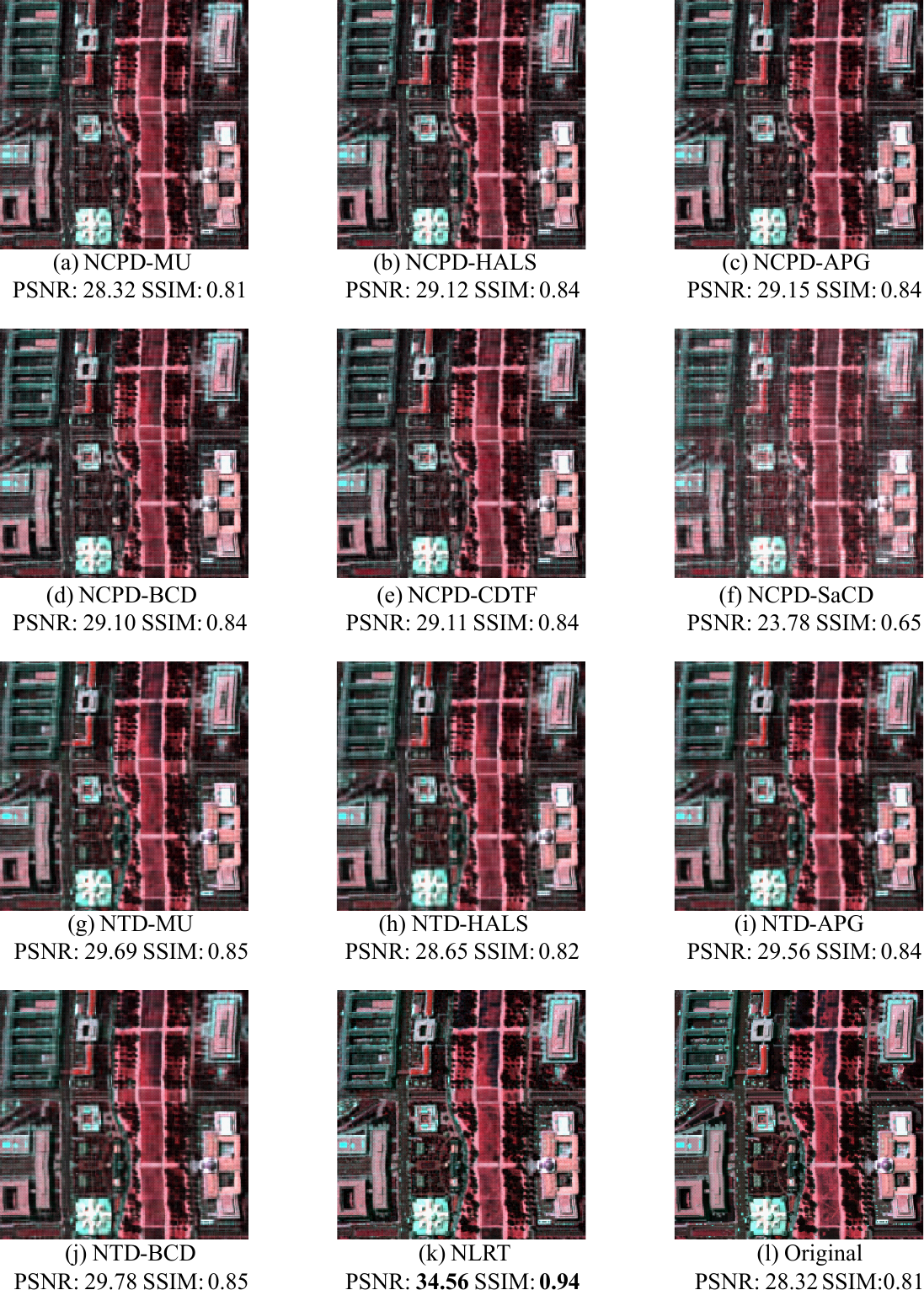}%
%
%
%
%
\caption{The pseudo-color images composed of the 113-th, 2-nd, and 16-th bands of the non-negative low-rank approximations by different methods when setting the rank $100$ on the Washington DC Mall.}
  \label{fig-hsi-DC-band}
\end{figure}

In Fig. \ref{fig-hsi-DC-band}, we display the pseudo-color images of the results on the Washington DC Mall dataset with the multilinear rank (100,100,100) and CP rank = 100.
The pseudo-color image is composed of the 113-th, 2-nd, and 16-th bands as the red, green, and blue channels, respectively. We also
compute two image quality assessments (IQAs):
the peak signal to noise
ratio (PSNR)\footnote{\scriptsize \url{https://en.wikipedia.org/wiki/Peak\_signal-to-noise\_ratio}}
and the structural similarity index  (SSIM) \cite{wang2004image} of all the spectral bands for each band. Higher values of these two indexes indicate better reconstruction quality. In Fig. \ref{fig-hsi-DC-band}, we report the mean values across spectral bands of these two IQAs.
 It can be found in Fig. \ref{fig-hsi-DC-band} that
both visual and quality assessments of the NCPD methods are comparable to NTD methods. The proposed NLRT method largely outperforms other methods in terms of two IQAs, achieving the first place.

\subsection{Selection of Features}\label{Exp:features}

\begin{figure}[!t]
\scriptsize\setlength{\tabcolsep}{0.5pt}
\centering
\includegraphics[width=0.98\linewidth]{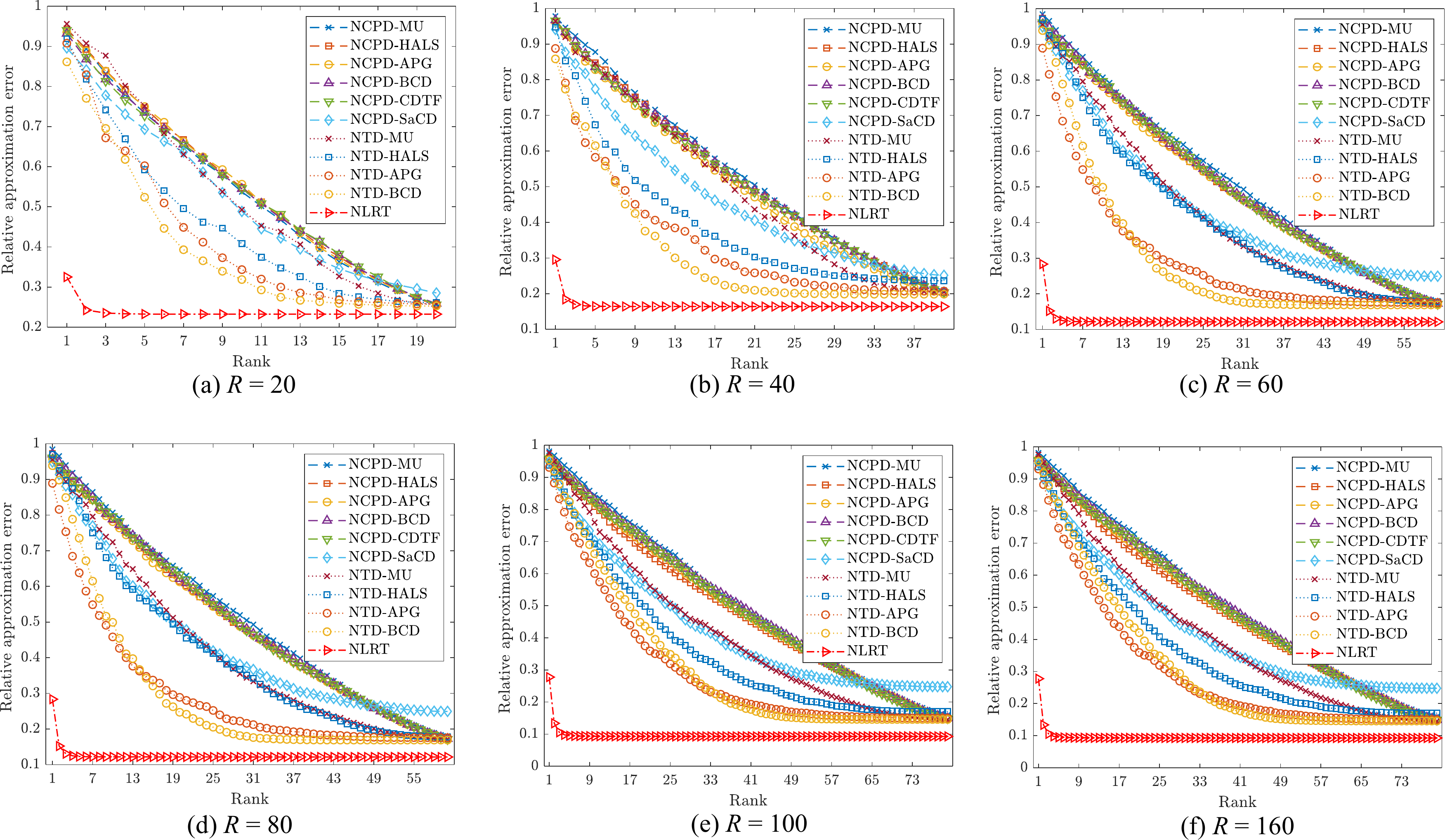}
%
\caption{The comparison of relative residuals with respect to the number of mode-3 components to be used in the tensor approximation with $R=20,40,60,80,160$ for the hyperspectral image Washington DC Mall.}
  \label{fig-hsi-sinval}
\end{figure}

One advantage of the proposed NLRT method is that it can
provide a significant index based on singular values of unfolding matrices
\cite{song2020nonnegative}
that can be used to identify important singular basis vectors in the approximation.
Those singular values and singular vectors are natural concomitants brought out by our algorithm without additional computations of SVD.

Here we take the HSI Washington DC Mall as an example.
We compute the low-rank approximations of the proposed NLRT method and the other comparison
methods with multlinear rank $(r,r,r)$ and CP rank $r$ for $r=20,40,60,80,160$.
For the approximation results by NCPD methods,
\begin{figure}[!t]
\centering
\includegraphics[width=0.98\linewidth]{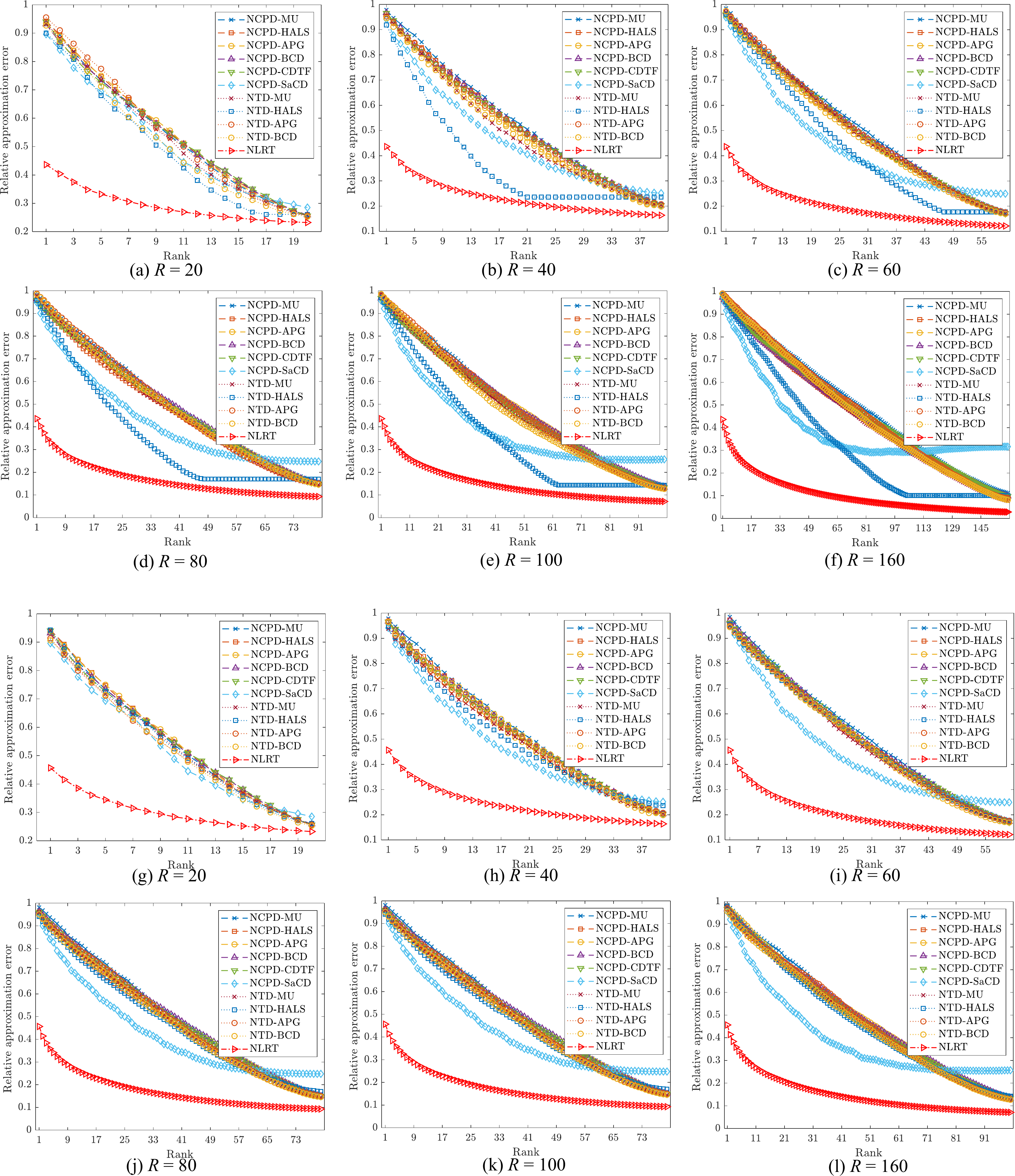}
%
\caption{The comparison of relative residuals with respect to the number of the first mode
(upper two rows from (a) to (f)) and the second mode (bottom two rows from (g) to (l)) components to be used in the tensor approximation with $R=20,40,60,80,160$ for the hyperspectral image Washington DC Mall.}
  \label{fig-hsi-sinval2}
\end{figure}we normalize the base vectors in \ref{CP} such that the $\ell_2$ norms
of $\mathbf{a}^{k,1}$, $\mathbf{a}^{k,2}$ and $\mathbf{a}^{k,3}$ are equal to 1,
and rearrange the resulting values $\lambda'_z$ in the descending order in the CP decomposition.
In Fig. \ref{fig-hsi-sinval}, we plot
$$\|\mathcal{A}-\mathcal{X}_\text{NCPD}(j)\|_F/\|\mathcal{A}\|_F$$ with respect to $j$,
where $\mathcal{X}_\text{NCPD}(j)=
\sum^j_{k=1} \lambda'_k \mathbf{a}^{k,1}\otimes \mathbf{a}^{k,2} \otimes \mathbf{a}^{k,3}$.
Similarly, for the results of NTD methods, we also plot
$$\|\mathcal{A}-\mathcal{X}_\text{NTD}(j)\|_F/\|\mathcal{A}\|_F$$ with respect to $j$, where
$\mathcal{X}_\text{NTD}(j) =
[\mathcal{G}\times_1 \mathbf{U}^{(1)}\times_2 \mathbf{U}^{(2)}]_{:,:,\mathbf k_j}
\times_3\mathbf{U}^{(3)}_{:,\mathbf k_j}$, $[\mathcal{G}\times_1 \mathbf{U}^{(1)}\times_2 \mathbf{U}^{(2)}]_{:,:,\mathbf k_j}$ is the $\mathbf k_j$-th mode-12 (spatial) slice of $[\mathcal{G}\times_1 \mathbf{U}^{(1)}\times_2 \mathbf{U}^{(2)}]$, and each $[\mathcal{G}\times_1 \mathbf{U}^{(1)}\times_2 \mathbf{U}^{(2)}]_{:,:,\mathbf k_j}$ is normalized with its Frobenius norm equaling to 1, and
$\mathbf k_j$ indicates a vector composed of the indexes corresponding to the $j$ largest
$\ell_2$ norms of $\mathbf{U}^{(3)}$'s columns.
For the results by our methods, we plot
$$\|\mathcal{A}-\mathcal{X}_\text{NLRT}(j)\|_F/\|\mathcal{A}\|_F$$ with respect to $j$, where $\mathcal{X}_\text{NLRT}(j)=\textrm{fold} \left(\sum_{i=1}^{j}
\sigma_{i}(\mathbf{X}_{3}) {\bf u}_{i}(\mathbf{X}_{3}) {\bf v}_{i}^{T}(\mathbf{X}_{3})\right)$,
$\sigma_{i}(\mathbf{X}_{3})$ is the $i$-th singular values of ${\bf X}_3$,
and ${\mathbf X}_3$ is the third-mode unfolding matrix of $\mathcal{X}$.
The third-mode of ${\cal X}$ is chosen in NTD and our  NLRT, we are interested to observe how many
indices required in the spectral mode of given hyperspectral data.

In Fig. \ref{fig-hsi-sinval}, we can see that when the number of components (namely $j$) increases, the relative residual decreases. Our  NLRT could provide a significant index based on singular values to identify important singular basis vectors for the approximation. Thus, the relative residuals by the proposed  NLRT algorithm are significantly smaller than those by the testing NTD and NCPD algorithms.
Similar phenomena can be found in Fig. \ref{fig-hsi-sinval2}, in which $\mathcal{X}_\text{NTD}(j)$
and $\mathcal{X}_\text{MP-NLRT}(j)$ are computed using the number of indices in the first or second
modes of ${\cal X}$.


\subsection{Image Classification}

The advantage of the proposed NLRT method is that
the important singular basis vectors can be identified within the algorithm. Such basis vectors can
provide useful information for image recognition such as classification.
Here we conduct hyperspectral image classification experiments
on the Indian Pines dataset\footnote{Data available at \scriptsize \url{https://engineering.purdue.edu/$\sim$biehl/MultiSpec/hyperspectral.html}.}.
This data set was captured by the Airborne Visible/Infrared Imaging Spectrometer (AVIRIS) sensor over the Indian Pines test site in North-western Indiana in June 1992.
After removing 20 bands, which cover the region of water absorption, this HSI is of the size $145\times145\times200$.
The ground truth contains 16 land cover classes as shown in Fig. \ref{indianpines_Image}.
Therefore, we set the multilinear rank to be $(16,16,16)$ and the CP rank to be 16
for all the testing methods.
We randomly choose $s$ of the available labeled samples, which are exhibited in
Table \ref{tab-india-samples}. Labeled samples from each class are used
for training and the remaining samples are used for testing.

\begin{table}[!t]
\scriptsize
\renewcommand\arraystretch{0.8}
\setlength{\tabcolsep}{1.5pt}
\centering
\caption{The number of label samples in each class.}
\begin{tabular}{cccccccccccc}\toprule
No. &1&2&3&4&5&6&7&8\\\midrule
\multirow{2}{*}{Name}&\multirow{2}{*}{Alfalfa}&Corn-&Corn-&\multirow{2}{*}{Corn}&Grass-&Grass- &Grass-pasture-&Hay- \\
&&no till&min till&&pasture&trees &mowed&windrowed \\\midrule
Samples&46&1428&830&237&483&730&28&478\\\bottomrule
\\
\toprule
No. &9&10&11&12&13&14&15&16\\\midrule
\multirow{2}{*}{Name}&\multirow{2}{*}{Oat} &Soybean-&Soybean- &Soybean-&\multirow{2}{*}{Wheat} &\multirow{2}{*}{Woods} &Buildings-Grass-&Stone- \\
& &no till&min till &clean& & &Trees-Drives&Steel-Towers \\\midrule
Samples&20&972&2455&593&205&1265&386&93\\\bottomrule
\end{tabular}%
\label{tab-india-samples}
\end{table}
\begin{figure}[!t]
\centering
\begin{tabular}{cc}
\includegraphics[width=0.41\linewidth]{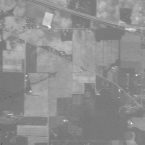} & \includegraphics[width=0.51\linewidth,height=0.41\linewidth]{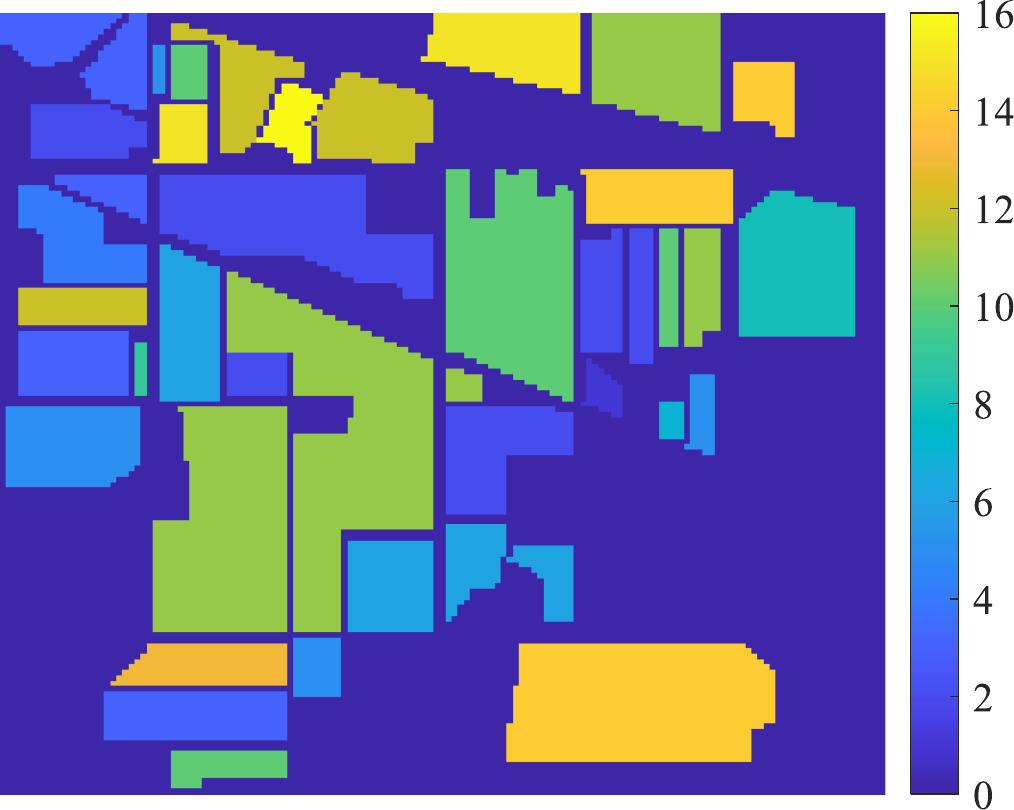}\\
(a) The 10-th band of the original HSI. & (b) The ground truth categorization map.
\end{tabular}
\caption{Indian Pines image and related ground truth categorization information. }
\label{indianpines_Image}
\end{figure}

After obtaining low rank approximations, 16 singular vectors corresponding to the largest 16 singular values of
the unfolding matrix of the tensor approximation along the spectral mode (the third mode) are employed for classification.
We apply the $k$-nearest neighbor ($k$-NN, $k=1,3,5$) classifiers to identify the testing samples in the projected trained samples representation.
The classification accuracy, which is defined as the portion of correctly identified entries, with respect to different $s$ is reported in Table \ref{tab-india-class}.
The results in Table \ref{tab-india-class} show that the classification based on our nonnegative low rank approximation is better than other comparison methods.

\begin{table}[!t]
\renewcommand\arraystretch{0.8}
\setlength{\tabcolsep}{2pt}
\centering
\caption{The accuracy (in terms of percentage) of the classification results on the approximations by different methods. The \textbf{best} values are highlighted in bold.}
\begin{tabular}{cccccccccccccccccc}
\toprule
\multirow{2}{*}{$s$} &Classi-&& \multicolumn{6}{l}{NCPD-}    
 && \multicolumn{4}{l}{NTD-} 
 &&   \multirow{2}{*}{NLRT}   \\

 &  fier                    && MU       & HALS  & APG   & BCD   & CDTF  & SaCD  && MU   & HALS  & APG   & BCD     &&                           \\\midrule
\multirow{3}{*}{10}
& 1-NN & & 69.68 & 69.71 & 67.91 & 66.40 & 65.56 & 61.50 && 65.89 & 71.12 & 73.98 & 73.70 && \bf 74.92 \\
& 3-NN & & 63.79 & 64.72 & 61.89 & 61.52 & 60.57 & 58.00 && 61.65 & 65.25 & 69.80 & 68.02 && \bf 70.12 \\
& 5-NN & & 62.11 & 62.72 & 60.46 & 60.23 & 59.21 & 56.58 && 61.26 & 63.67 & 67.53 & 65.68 && \bf 68.38 \\
\midrule
\multirow{3}{*}{20}
& 1-NN & & 77.04 & 77.35 & 75.05 & 74.78 & 74.74 & 67.95 && 73.14 & 79.21 & 81.16 & 81.51 && \bf 82.06 \\
& 3-NN & & 72.09 & 72.39 & 70.59 & 69.80 & 69.53 & 63.76 && 69.15 & 75.20 & 77.45 & 76.69 && \bf 77.47 \\
& 5-NN & & 69.59 & 70.10 & 68.31 & 68.54 & 67.60 & 63.43 && 67.53 & 73.16 & 75.12 & 74.55 && \bf 75.60 \\
\midrule
\multirow{3}{*}{30}
& 1-NN & & 81.20 & 81.01 & 78.82 & 79.28 & 78.36 & 71.36 && 76.76 & 83.19 & 84.24 & 85.03 && \bf 85.71 \\
& 3-NN & & 76.76 & 76.84 & 74.44 & 74.37 & 73.95 & 68.13 && 72.11 & 78.68 & 80.12 & 80.91 && \bf 81.62 \\
& 5-NN & & 74.06 & 74.52 & 72.38 & 72.21 & 72.14 & 66.46 && 71.18 & 76.54 & 78.29 & 78.74 && \bf 79.16 \\
\midrule
\multirow{3}{*}{40}
& 1-NN & & 84.19 & 84.32 & 81.78 & 82.09 & 82.01 & 74.79 && 79.38 & 86.36 & 86.80 & 87.18 && \bf 88.51 \\
& 3-NN & & 80.17 & 79.96 & 77.99 & 77.99 & 78.14 & 71.17 && 75.49 & 81.76 & 83.93 & 84.11 && \bf 84.87 \\
& 5-NN & & 78.09 & 78.34 & 76.14 & 75.82 & 75.87 & 69.84 && 74.40 & 79.80 & 81.73 & 81.94 && \bf 82.98 \\
\midrule
\multirow{3}{*}{50}
& 1-NN & & 85.73 & 86.27 & 83.50 & 83.89 & 83.81 & 77.15 && 82.14 & 88.09 & 88.16 & 88.81 && \bf 90.19 \\
& 3-NN & & 82.31 & 82.14 & 79.91 & 80.23 & 80.42 & 73.95 && 78.10 & 83.95 & 85.98 & 85.96 && \bf 86.52 \\
& 5-NN & & 80.19 & 80.60 & 77.94 & 78.32 & 78.12 & 72.21 && 76.95 & 81.92 & 84.05 & 84.03 && \bf 84.79 \\
\bottomrule
\end{tabular}%
\label{tab-india-class}
\end{table}
\section{Conclusion}
In the paper, we proposed a new idea for computing nonnegative low rank tensor approximation. We  proposed a method called  NLRT which determines a nonnegative low rank approximation to given data by taking use of low rank matrix manifolds and non-negativity property. The convergence analysis is given. Experiments in synthetic data sets and multi-dimensional image data sets are conducted to present the performance of the proposed  NLRT method. It shows that  NLRT is better than classical nonnegative tensor factorization methods.
\begin{acknowledgements}
\thanks{T.-X. Jiang's research is supported in part by the National Natural Science
Foundation of China under Grant 12001446. M. K. Ng's research is supported in part by the HKRGC GRF under Grant
12300218, 12300519, 17201020 and 17300021. G.-J. Song's research is supported in part by the National Natural Science
Foundation of China under Grant 12171369 and Key NSF of Shandong Province under Grant ZR2020KA008.}
\end{acknowledgements}
\bibliographystyle{spmpsci}
\bibliography{NLRT_refa}

\begin{thebibliography}{10}
\providecommand{\url}[1]{{#1}}
\providecommand{\urlprefix}{URL }
\expandafter\ifx\csname urlstyle\endcsname\relax
  \providecommand{\doi}[1]{DOI~\discretionary{}{}{}#1}\else
  \providecommand{\doi}{DOI~\discretionary{}{}{}\begingroup
  \urlstyle{rm}\Url}\fi

\bibitem{Attouch2010}
Attouch, H., Bolte, J., Redont, P., Soubeyran, A.: Proximal alternating
  minimization and projection methods for nonconvex problems: An approach based
  on the kurdyka-{\l}ojasiewicz inequality.
\newblock Mathematics of Operations Research \textbf{35}(2), 438--457 (2010)

\bibitem{balasubramaniam2020efficient}
Balasubramaniam, T., Nayak, R., Yuen, C.: Efficient nonnegative tensor
  factorization via saturating coordinate descent.
\newblock ACM Transactions on Knowledge Discovery from Data (TKDD)
  \textbf{14}(4), 1--28 (2020)

\bibitem{Bauschke20132}
Bauschke, H.H., Luke, D.R., Phan, H.M., Wang, X.: Restricted normal cones and
  the method of alternating projections: applications.
\newblock Set-Valued and Variational Analysis \textbf{21}(3), 475--501 (2013)

\bibitem{Bauschke20131}
Bauschke, H.H., Luke, D.R., Phan, H.M., Wang, X.: Restricted normal cones and
  the method of alternating projections: theory.
\newblock Set-Valued and Variational Analysis \textbf{21}(3), 431--473 (2013)

\bibitem{cichocki2009fast}
Cichocki, A., Phan, A.H.: Fast local algorithms for large scale nonnegative
  matrix and tensor factorizations.
\newblock IEICE Transactions on Fundamentals of Electronics, Communications and
  Computer Sciences \textbf{92}(3), 708--721 (2009)

\bibitem{cichocki2007hierarchical}
Cichocki, A., Zdunek, R., Amari, S.i.: Hierarchical {ALS} algorithms for
  nonnegative matrix and {3D} tensor factorization.
\newblock In: International Conference on Independent Component Analysis and
  Signal Separation, pp. 169--176. Springer (2007)

\bibitem{clarke1990regularity}
Clarke, F., Vinter, R.: Regularity properties of optimal controls.
\newblock SIAM Journal on Control and Optimization \textbf{28}(4), 980--997
  (1990)

\bibitem{de2000multilinear}
De~Lathauwer, L., De~Moor, B., Vandewalle, J.: A multilinear singular value
  decomposition.
\newblock SIAM Journal on Matrix Analysis and Applications \textbf{21}(4),
  1253--1278 (2000)

\bibitem{de2000best}
De~Lathauwer, L., De~Moor, B., Vandewalle, J.: On the best rank-1 and rank-(r
  1, r 2,..., rn) approximation of higher-order tensors.
\newblock SIAM journal on Matrix Analysis and Applications \textbf{21}(4),
  1324--1342 (2000)

\bibitem{drusvytskiy2014}
Drusvyatskiy, D., Ioffe, A., Lewis, A.: Alternating projections and coupling
  slope.
\newblock arXiv preprint arXiv:1401.7569 pp. 1--17 (2014)

\bibitem{golub2012matrix}
Golub, G.H., Van~Loan, C.F.: Matrix computations, vol.~3.
\newblock JHU Press (2012)

\bibitem{kim2007nonnegative}
Kim, Y.D., Choi, S.: Nonnegative {T}ucker decomposition.
\newblock In: 2007 IEEE Conference on Computer Vision and Pattern Recognition,
  pp. 1--8. IEEE (2007)

\bibitem{kolda2009tensor}
Kolda, T.G., Bader, B.W.: Tensor decompositions and applications.
\newblock SIAM Review \textbf{51}(3), 455--500 (2009)

\bibitem{kroonenberg2008applied}
Kroonenberg, P.M.: Applied multiway data analysis, vol. 702.
\newblock John Wiley \& Sons (2008)

\bibitem{Lewis2009}
Lewis, A.S., Luke, D.R., Malick, J.: Local linear convergence for alternating
  and averaged nonconvex projections.
\newblock Foundations of Computational Mathematics \textbf{9}(4), 485--513
  (2009)

\bibitem{lewis2009local}
Lewis, A.S., Luke, D.R., Malick, J.: Local linear convergence for alternating
  and averaged nonconvex projections.
\newblock Foundations of Computational Mathematics \textbf{9}(4), 485--513
  (2009)

\bibitem{Lewis2008}
Lewis, A.S., Malick, J.: Alternating projections on manifolds.
\newblock Mathematics of Operations Research \textbf{33}(1), 216--234 (2008)

\bibitem{li2016douglas}
Li, G., Pong, T.K.: Douglas--rachford splitting for nonconvex optimization with
  application to nonconvex feasibility problems.
\newblock Mathematical Programming \textbf{159}(1-2), 371--401 (2016)

\bibitem{xutaoli2017}
Li, X., Ng, M.K., Cong, G., Ye, Y., Wu, Q.: {MR-NTD}: Manifold regularization
  nonnegative {T}ucker decomposition for tensor data dimension reduction and
  representation.
\newblock IEEE Transactions on Neural Networks and Learning Systems
  \textbf{28}(8), 1787--1800 (2016)

\bibitem{noll2016}
Noll, D., Rondepierre, A.: On local convergence of the method of alternating
  projections.
\newblock Foundations of Computational Mathematics \textbf{16}(2), 425--455
  (2016)

\bibitem{pan2019orthogonal}
Pan, J., Ng, M.K., Liu, Y., Zhang, X., Yan, H.: Orthogonal nonnegative tucker
  decomposition.
\newblock arXiv preprint arXiv:1912.06836  (2019)

\bibitem{rockafellar2009variational}
Rockafellar, R.T., Wets, R.J.B.: Variational analysis, vol. 317.
\newblock Springer Science \& Business Media (2009)

\bibitem{shin2016fully}
Shin, K., Sael, L., Kang, U.: Fully scalable methods for distributed tensor
  factorization.
\newblock IEEE Transactions on Knowledge and Data Engineering \textbf{29}(1),
  100--113 (2016)

\bibitem{sidiropoulos2017tensor}
Sidiropoulos, N.D., De~Lathauwer, L., Fu, X., Huang, K., Papalexakis, E.E.,
  Faloutsos, C.: Tensor decomposition for signal processing and machine
  learning.
\newblock IEEE Transactions on Signal Processing \textbf{65}(13), 3551--3582
  (2017)

\bibitem{song2020nonnegative}
Song, G.J., Ng, M.K.: Nonnegative low rank matrix approximation for nonnegative
  matrices.
\newblock Applied Mathematics Letters p. 106300 (2020)

\bibitem{tucker1966some}
Tucker, L.R.: Some mathematical notes on three-mode factor analysis.
\newblock Psychometrika \textbf{31}(3), 279--311 (1966)

\bibitem{wang2004image}
Wang, Z., Bovik, A.C., Sheikh, H.R., Simoncelli, E.P.: Image quality
  assessment: from error visibility to structural similarity.
\newblock IEEE Transactions on Image Processing \textbf{13}(4), 600--612 (2004)

\bibitem{welling2001positive}
Welling, M., Weber, M.: Positive tensor factorization.
\newblock Pattern Recognition Letters \textbf{22}(12), 1255--1261 (2001)

\bibitem{xu2015parallel}
Xu, Y., Hao, R., Yin, W., Su, Z.: Parallel matrix factorization for low-rank
  tensor completion.
\newblock Inverse Problems and Imaging \textbf{9}(2), 601--624 (2015)

\bibitem{xu2013block}
Xu, Y., Yin, W.: A block coordinate descent method for regularized multiconvex
  optimization with applications to nonnegative tensor factorization and
  completion.
\newblock SIAM Journal on Imaging Sciences \textbf{6}(3), 1758--1789 (2013)

\bibitem{zhang2016fast}
Zhang, Y., Zhou, G., Zhao, Q., Cichocki, A., Wang, X.: Fast nonnegative tensor
  factorization based on accelerated proximal gradient and low-rank
  approximation.
\newblock Neurocomputing \textbf{198}, 148--154 (2016)

\bibitem{zhou2012fast}
Zhou, G., Cichocki, A., Xie, S.: Fast nonnegative matrix/tensor factorization
  based on low-rank approximation.
\newblock IEEE Transactions on Signal Processing \textbf{60}(6), 2928--2940
  (2012)

\end{thebibliography}

\end{document}